\documentclass{article}

\usepackage[final]{neurips_2019}
\usepackage{url}
\urlstyle{sf}

\usepackage[utf8]{inputenc} 
\usepackage[T1]{fontenc}    
\usepackage{hyperref}       
\usepackage{url}            
\usepackage{booktabs}       
\usepackage{amsfonts}       
\usepackage{nicefrac}       
\usepackage{microtype}      

\usepackage{times}
\usepackage{color}

\usepackage{paralist} 
\usepackage{psfrag}
\usepackage{dsfont}
\usepackage{paralist}

\definecolor{RoyalBlue}{cmyk}{1, 0.50, 0, 0}
\definecolor{ForestGreen}{cmyk}{0.864, 0.0, 0.429, 0.396}
\definecolor{Brown}{cmyk}{0.0,0.692,0.925,0.529}

\usepackage{todonotes}
\newcommand{\ifcomments}{\iftrue}

\usepackage[english, ruled, linesnumbered, vlined]{algorithm2e}

\usepackage{amssymb,amsmath,amsthm}

\usepackage{subfigure}
\usepackage[makeroom]{cancel}
\usepackage{thmtools,thm-restate}
\usepackage{comment}
\usepackage{multirow}

\usepackage{centernot}

\usepackage{upgreek}

\usepackage[utf8]{inputenc}

\newcommand{\Ball}{\mathrm{Ball}}
\newcommand{\Rect}{\mathcal{R}ect}

\newcommand{\Risk}{\mathrm{Risk}}

\newcommand{\fhat}[2]{\ifthenelse{\equal{#2}{}}{\hat{f}[#1]}{\ifthenelse{\equal{#2}{0}}{\hat{f}[\emptyset]}{\hat{f}[#1_{\leq #2}]}}}
\newcommand{\gain}[2]{\ifthenelse{\equal{#2}{}}{g[#1]}{g[#1_{\leq #2}]}}

\newcommand{\pr}[2][]{\Pr_{\ifthenelse{\isempty{#1}}{}{{#1}}}\left[{#2}\right]}


\makeatother
\newcommand{\indf}[2]{\mathds{1}_{#1}(#2)}











\usepackage{graphicx,accents}

\newcommand{\remove}[1]{}






\newcommand{\ceil}[1]{\lceil #1 \rceil}

\newcommand{\set}[1]{\left\{ #1 \right\}}


\newcommand{\size}[1]{\left|#1\right|}
\newcommand{\abs}[1]{\size{#1}}
\newcommand{\norm}[1]{\abs{#1}}



\newcommand{\R}{{\mathbb R}}
\newcommand{\N}{{\mathbb N}}

\newcommand{\cA}{{\mathcal A}}
\newcommand{\cB}{{\mathcal B}}
\newcommand{\cC}{{\mathcal C}}

\newcommand{\cE}{{\mathcal E}}
\newcommand{\cF}{{\mathcal F}}
\newcommand{\cG}{{\mathcal G}}

\newcommand{\cR}{{\mathcal R}}
\newcommand{\cS}{{\mathcal S}}

\newcommand{\cX}{{\mathcal X}}

\usepackage{bm}


\newcommand{\eps}{\epsilon}

















\newtheorem{theorem}{Theorem}[section]

\newtheorem{lemma}[theorem]{Lemma}

\newtheorem{definition}[theorem]{Definition}

\newtheoremstyle{mytheoremstyle} 
    {\topsep}                    
    {\topsep}                    
    {\normalfont}                   
    {}                           
    {\bfseries}                   
    {.}                          
    {.5em}                       
    {}  

\theoremstyle{mytheoremstyle}

\newtheorem{remark}[theorem]{Remark}

\newcommand{\sdotfill}{\textcolor[rgb]{0.8,0.8,0.8}{\dotfill}} 

\newtheorem{proto}[theorem]{Protocol}
\newtheorem{protoc}[theorem]{Protocol}

\newcommand{\namedref}[2]{#1~\ref{#2}}

\newcommand{\torestate}[3]{%
\expandafter \def \csname BBRESTATE #2 \endcsname{#3}
\theoremstyle{plain}
\newtheorem{BBRESTATETHMNUM#2}[theorem]{#1}
\begin{BBRESTATETHMNUM#2}\label{#2}\csname BBRESTATE #2 \endcsname   \end{BBRESTATETHMNUM#2}
\newtheorem*{BBRESTATETHMNONNUM#2}{\namedref{#1}{#2}}
}

\newcommand{\restate}[1]{\begin{BBRESTATETHMNONNUM#1}[Restated] \csname BBRESTATE #1 \endcsname
\end{BBRESTATETHMNONNUM#1}}


\usepackage{xspace}








\newcommand{\argmin}{\operatornamewithlimits{argmin}}

%



%










\newcommand{\RR}{\ensuremath{\mathbb R}\xspace}

\newcounter{definitioncnt}
\setcounter{definitioncnt}{0}

\newcounter{thmcnt}
\setcounter{thmcnt}{0}



\newcommand{\shortsection}[1]{\vspace{1ex}\noindent{\bf #1.}}
\definecolor{carnelian}{rgb}{0.7, 0.11, 0.11}

\def \bx {\bm{x}}

\def \bu {\bm{u}}
\def \br {\bm{r}}

\DeclareMathOperator*{\minimize}{\text{minimize}}

\def \AdvRisk {\mathrm{AdvRisk}}
\def \init {\text{init}}
\def \exp {\text{exp}}

\title{Empirically Measuring Concentration: \\ Fundamental Limits on Intrinsic Robustness}

\author{Saeed Mahloujifar\thanks{Equal contribution.}$\;$, Xiao Zhang$^*$, Mohammad Mahmoody, and David Evans  \\
University of Virginia \\
\textsf{\small [saeed, shawn, mohammad, evans]@virginia.edu}}

\begin{document}

\maketitle

\begin{abstract}
Many recent works have shown that adversarial examples that fool classifiers can be found by minimally perturbing a normal input. Recent theoretical results, starting with \citet{gilmer2018adversarial}, show that if the inputs are drawn from a \emph{concentrated} metric probability space, then adversarial examples with small perturbation are inevitable. A concentrated space has the property that any subset with $\Omega(1)$ (e.g., 1/100) measure, according to the imposed distribution,  has small distance to almost all (e.g., 99/100) of the points in the space. It is not clear, however, whether these theoretical results apply to actual distributions such as images. This paper presents a method for empirically measuring and bounding the concentration of a concrete dataset which is proven to converge to the actual concentration. We use it to empirically estimate the intrinsic robustness to $\ell_\infty$ and $\ell_2$ perturbations of several image classification benchmarks. Code for our experiments is available at {\small \url{https://github.com/xiaozhanguva/Measure-Concentration}}.

\end{abstract}

\section{Introduction}
Despite achieving exceptionally high accuracy on natural inputs, state-of-the-art machine learning models have been shown to be vulnerable to adversaries who use small perturbations to fool the classifier~\citep{szegedy2014intriguing, goodfellow2015explaining}. 
This phenomenon, known as \emph{adversarial examples}, has motivated numerous studies \citep{papernot2016distillation,madry2017towards,biggio2018wild,gilmer2018motivating} to develop heuristic defenses that aim to improve classifier robustness. 
However, most defense mechanisms have been quickly broken by adaptive attacks \citep{carlini2017towards,athalye2018obfuscated}. Although certification methods \citep{raghunathan2018certified,wong2018provable,sinha2018certifiable,wong2018scaling,gowal2018effectiveness, wang2018mixtrain,zhang2019stable} have been proposed aiming to end such arms race and continuous efforts have been made to develop better robust models, both
the robustness guarantees and efficiency achieved by state-of-the-art robust classifiers are far from satisfying.

This motivates a fundamental information-theoretic question: \emph{what are the inherent limitations of developing robust classifiers?} 
Several recent works \citep{gilmer2018adversarial,fawzi2018adversarial,mahloujifar2018curse,shafahi2018adversarial,bhagoji2019lower} have shown that under certain assumptions regarding the data distribution and the perturbation metric, adversarial examples are theoretically inevitable. As a result, for a broad set of theoretically natural metric probability spaces of inputs, there is no classifier for the data distribution that achieves adversarial robustness.
For example, \citet{gilmer2018adversarial} assumed that the input data are sampled uniformly from $n$-spheres and proved a model-independent theoretical bound connecting the risk to the average Euclidean distance to the ``caps'' (i.e., round regions on a sphere). \citet{mahloujifar2018curse} generalized this result to any concentrated metric probability space of inputs and showed, for example, that if the inputs come from any Normal L\'evy family \citep{levy1951problemes}, any classifier with a noticable test error will be vulnerable to small (i.e., sublinear in the typical norm of the inputs) perturbations.

Although such theoretical findings seem discouraging to the goal of developing robust classifiers, all these impossibility results depend on assumptions about data distributions that might not hold for cases of interest.  Our work develops a general method for testing properties of concrete datasets against these theoretical assumptions.

\shortsection{Contributions}
Our work shrinks the gap between theoretical analyses of robustness of classification for theoretical data distributions and understanding the intrinsic robustness of actual datasets. Indeed, quantitative estimates of the intrinsic robustness\footnote{See Definition \ref{def:intrinsicrobustness} for the formal definition of intrinsic robustness. The term robustness has been used with different meanings in previous works (e.g., in \citet{diochnos2018adversarial}, it refers to the average distances to the error region). However, all such uses refer to a desirable property of the classifier in being resilient to adversarial perturbations, which is the case here as well.
See \cite{diochnos2018adversarial} for a taxonomy of different definitions.} of benchmark image datasets such as MNIST and CIFAR-10 can provide us with a better understanding of the threat of adversarial examples for natural image distributions and may suggest promising directions for further improving classifier robustness. 
Our main technical contribution is a general method to evaluate the concentration of a given input distribution $\mu$ based on a set of data samples. We prove that by simultaneously increasing the sample size $m$ and a complexity parameter $T$, the concentration of the empirical measure converges to the actual concentration of $\mu$ (Section~\ref{sec:theory}).
Using this method, we perform experiments to demonstrate the existence of robust error regions for benchmark datasets under both $\ell_\infty$ and $\ell_2$ perturbations (Section~\ref{sec:experiments}). Compared with state-of-the-art robustly trained models, our estimated intrinsic robustness shows that, for most settings, there exists a large gap between the robust error achieved by the best current models and the theoretical limits implied by concentration. This suggests the concentration of measure is not the only reason behind the vulnerability of existing classifiers to adversarial perturbations. Thus, either there is room for improving the robustness of image classifiers (even with non-zero classification error) or a need for deeper understanding of the reasons for the gap between intrinsic robustness and the actual robustness achieved by robust models, at least for the datasets like the image classification benchmarks used in our experiments. 

\shortsection{Related Work} We are aware of only one previous work that attempts to heuristically estimate these properties.  To extend their theoretical impossibility result to the practical distributions, \cite{gilmer2018adversarial} studied MNIST dataset to find a region that is somewhat robust in terms of the \textit{expected} $\ell_2$ distance of other images from the region. In their setting, they showed the existence of a set of measure $0.01$ with average $\ell_2$ distance $6.59$ to all points. In comparison, our work is the first to provide a general methodology to empirically estimate the concentration of measure with provable guarantees.
Moreover, we are able to deal with $\ell_\infty$, and \emph{worst-case} bounded perturbations for modeling adversarial risk, which is the most popular setting for research in adversarial examples. In addition, another related concurrent work \citep{bhagoji2019lower} studied lower bounds on the adversarial risk using optimal transport on the metric probability space of instances. They also measure the optimal transport on the empirical distributions but do not characterize the relationship between the optimal transport of empirical datasets and the actual one of the underlying distributions.

Another related line of work estimated lower bounds on the concentration of measure of the underlying distribution through simulating distributions by generative models.  \cite{fawzi2018adversarial} proved a lower bound on the concentration of the generated image distribution, assuming the underlying generative model has Gaussian latent space and small Lipschitz constant.  \cite{krusinga2019understanding} estimated an upper bound on the density function of the distribution using generative model, then proved concentration inequalities based on upper bounds on the density function. Our work is distinct from these works, because we directly learn the concentration function instead of a lower bound, and we use the actual data samples instead of samples generated from some trained generative model. 

The work of \cite{tsipras2018robustness} studied the trade-off between robustness and accuracy. They show that for some specific learning problems, achieving robustness and accuracy together is not possible. At first glance, it might seem that this trade-off contradicts  the existing lower bounds that come from concentration of measure. However, there is no contradiction and what is proved there is with regard to a different definition of adversarial examples.  The definition of adversarial examples used there could diverge from our definition in some learning problems (see \cite{diochnos2018adversarial}), but they coincide in the cases that the ground truth function is robust to small perturbations.

\shortsection{Notation}
Lowercase boldface letters such as $\bx$ are used to denote vectors, and $[n]$ is used to represent $\{1,2,\ldots,n\}$.
For any set $\cA$, let $\mathsf{Pow}(\cA)$, $|\cA|$ and $\indf{\cA}{\cdot}$ be the set of measurable subsets of $\cA$, cardinality and indicator function of $\cA$, respectively. For any $\bx\in\RR^{n}$, the $\ell_{\infty}$-norm and $\ell_2$-norm of $\bx$ are defined as $\|\bx\|_{\infty} = \max_{i\in[n]} |x_i|$ and $\|\bx\|_{2} = (\sum_{i\in[n]} x_i^2)^{1/2}$ respectively. 
Let $(\cX,\mu)$ be a probability space and $d:\cX\times\cX\rightarrow\RR$ be some distance metric defined on $\cX$. Define the empirical measure with respect to a set $\cS$ sampled from $\mu$ as $\hat{\mu}_{\cS}(\cA) = \sum_{\bx\in \cS} \indf{\cA}{\bx}/|\cS|, \forall \cA\subseteq\cX$. Let $\Ball(\bx, \epsilon) = \{\bx'\in\cX: d(\bx',\bx)\leq \epsilon\}$ be the ball around $\bx$ with radius $\epsilon$. For any subset $\cA\subseteq\cX$, define the $\epsilon$-expansion $\cA_\epsilon = \set{\bx \in \cX\colon \exists\: \bx' \in \Ball(\bx,\epsilon)\cap\cA}$. The collection of the $\epsilon$-expansions for members of any $\cG\subseteq \mathsf{Pow}(\cX)$ is defined and denoted as $\cG_\epsilon = \set{\cA_\epsilon: \cA\in \cG}$. 

\section{Robustness and Concentration of Measure}
\label{sec:concentration}

In this paper, we work with the following definition of \emph{adversarial risk}:
\begin{definition}[Adversarial Risk]
\label{def:risk}
Let $(\cX, \mu)$ be the probability space of instances and $f^*$ be the underlying ground-truth. The \emph{adversarial risk} of a classifier $f$ in metric $d$ with strength $\epsilon$ is defined as 
\begin{equation*}
\AdvRisk_\epsilon(f,f^*) = \Pr_{\bx\gets\mu} \big[\exists\: \bx'\in\Ball(\bx,\epsilon) \text{ s.t. } f(\bx')\neq f^*(\bx')\big].
\footnote{Note that bounding $l_p$ norm might be restrictive for the adversary \citep{gilmer2018motivating} and this definition only covers a subset of possible adversaries.}
\end{equation*}
\end{definition} 
For $\epsilon=0$, which allows no perturbation, the notion of adversarial risk coincides with traditional risk.

\begin{definition}[Intrinsic Robustness]
\label{def:intrinsicrobustness}
Consider the same setting as in Definition \ref{def:risk}. Let $\cF$ be some family of classifiers, then the \emph{intrinsic robustness} is defined as the maximum adversarial robustness that can be achieved within $\cF$, namely
\begin{equation*}
\mathrm{Rob}_{\epsilon}(\cF, f^*) = 1-\inf_{f\in\cF}\big\{\AdvRisk_\epsilon(f,f^*) \big\}.
\end{equation*}
\end{definition} 
In this work, we specify $\cF$ as the family of imperfect classifiers that have risk at least $\alpha\in(0,1)$.

Previous work shows a connection between concentration of measure and the intrinsic robustness with respect to some families of classifiers (\cite{gilmer2018adversarial,fawzi2018adversarial,mahloujifar2018curse,shafahi2018adversarial}). The concentration of measure on a metric probability space is defined by a concentration function as follows.
\begin{definition} [Concentration Function]
\label{def:concentration func for collection}
Consider a metric probability space $(\cX, \mu, d)$. 
Suppose $\epsilon>0$ and $\alpha\in(0,1)$ are given parameters, then the \emph{concentration function} of the probability measure $\mu$ with respect to $\epsilon$, $\alpha$ is defined as
\begin{equation*}
h(\mu, \alpha,\epsilon) = \inf_{\cE\in \mathsf{Pow}(\cX)}\set{\mu(\cE_\epsilon)\colon \mu(\cE)\geq \alpha}.
\end{equation*}
\end{definition}
Note that the standard notion of concentration function (e.g., see \cite{talagrand1995concentration}) is related to a special case of Definition \ref{def:concentration func for collection} by fixing $\alpha=1/2$.

Generalizing the result of \cite{gilmer2018adversarial} about instances drawn from spheres, \cite{mahloujifar2018curse} showed that, in general, if the metric probability space of instances is concentrated, then any classifier with 1\% risk incurs large adversarial risk for small amount of perturbations. 

\begin{theorem} [\cite{mahloujifar2018curse}]
Let $(\cX, \mu)$ be the probability space of instances and $f^*$ be the underlying ground-truth. For any classifier $f$, we have
\begin{equation*}\AdvRisk_\epsilon(f,f^*) \geq h(\mu, \Risk(f,f^*), \epsilon).\end{equation*}
\end{theorem}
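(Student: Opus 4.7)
The plan is to realize the adversarial risk as the measure of the $\epsilon$-expansion of the error region of $f$, and then invoke the definition of the concentration function directly.

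First, I would introduce the error region
\[
\cE = \set{\bx \in \cX \colon f(\bx) \neq f^*(\bx)}.
\]
By the definition of standard risk we have $\mu(\cE) = \Risk(f, f^*)$. (This step implicitly requires $\cE \in \mathsf{Pow}(\cX)$, i.e.\ measurability of $f$ and $f^*$, which is part of the standing setup.)

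Second, I would rewrite the event inside $\AdvRisk_\epsilon(f,f^*)$ in terms of $\cE$. The condition that there exists $\bx' \in \Ball(\bx,\epsilon)$ with $f(\bx') \neq f^*(\bx')$ is the same as the condition $\Ball(\bx,\epsilon) \cap \cE \neq \emptyset$, and by symmetry of the metric $d$ this is exactly the condition $\bx \in \cE_\epsilon$ in the paper's notation. Therefore
\[
\AdvRisk_\epsilon(f, f^*) \;=\; \Pr_{\bx \gets \mu}[\bx \in \cE_\epsilon] \;=\; \mu(\cE_\epsilon).
\]

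Third, I would apply the definition of the concentration function with the parameter $\alpha := \Risk(f, f^*)$. Since $\mu(\cE) = \alpha \geq \alpha$, the set $\cE$ is a feasible candidate in the infimum defining $h(\mu, \alpha, \epsilon)$, hence
\[
\mu(\cE_\epsilon) \;\geq\; \inf_{\cE' \in \mathsf{Pow}(\cX)} \set{\mu(\cE'_\epsilon) \colon \mu(\cE') \geq \alpha} \;=\; h\bigl(\mu, \Risk(f,f^*), \epsilon\bigr).
\]
Combining with the previous display gives the claim. The argument is essentially definitional; the only real content is the identification $\AdvRisk_\epsilon(f,f^*) = \mu(\cE_\epsilon)$, and the only subtlety worth flagging is the measurability of $\cE$ and of its $\epsilon$-expansion, which I would briefly note follows from standard assumptions on the metric space $(\cX, d)$ and the classifiers under consideration.
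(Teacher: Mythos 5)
Your proof is correct and is exactly the intended, essentially definitional argument: the paper itself imports this theorem from \citet{mahloujifar2018curse} without reproving it, and the underlying proof proceeds just as you do, by identifying $\AdvRisk_\epsilon(f,f^*)$ with $\mu(\cE_\epsilon)$ for the error region $\cE=\set{\bx\colon f(\bx)\neq f^*(\bx)}$ and then using $\cE$ as a feasible point in the infimum defining $h(\mu,\Risk(f,f^*),\epsilon)$. The only cosmetic remark is that with the paper's definitions of $\Ball(\bx,\epsilon)$ and of the $\epsilon$-expansion, the event $\exists\,\bx'\in\Ball(\bx,\epsilon)$ with $f(\bx')\neq f^*(\bx')$ is literally the event $\bx\in\cE_\epsilon$, so the appeal to symmetry of $d$ is not even needed.
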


In order for this theorem to be useful, we need to know the concentration function. The behavior of this function is studied extensively for certain theoretical metric probability spaces \citep{ledoux2001concentration,milman1986asymptotic}.  However, it is not known how to measure the concentration function for arbitrary metric probability spaces. In this work, we provide a framework to (algorithmically) bound the concentration function from i.i.d. samples from a distribution.
 Namely, we want to solve the following optimization task using our i.i.d. samples:

\begin{align}\label{eq:opt concentration}
  \minimize_{\cE\in \mathsf{Pow}(\cX)} \:\: \mu(\cE_\epsilon) \quad \text{subject to} \:\: \mu(\cE) \geq \alpha.
\end{align}

We aim to estimate the minimum possible adversarial risk, which captures the intrinsic robustness for classification in terms of the underlying distribution $\mu$, conditioned on the fact that the original risk is at least $\alpha$.  Note that solving this optimization problem only shows the possibility of existence of an error region $\cE$ with certain (small) expansion. This means that there could potentially exist a classifier with risk at least $\alpha$ and adversarial risk  equal to the solution of the optimization problem of  \eqref{eq:opt concentration}. Actually \emph{finding} such an optimally robust classifier (with error $\alpha$) using a learning algorithm might be a much more difficult task or even infeasible. We do not consider that problem in this work.

\section{Method for Measuring Concentration}
\label{sec:theory}

In this section, we present a method to measure the concentration of measure on a metric probability space using i.i.d.\ samples. To measure concentration, there are two main challenges:
\begin{enumerate}
    \item  Measuring concentration appears to require knowledge of the density function of the distribution, but we only have a data set sampled from the distribution.
    \item Even with the density function, we have to find the best possible subset among all the subsets of the space, which seems infeasible.
\end{enumerate} 

We show how to overcome these challenges and find the actual concentration in the limit by first empirically simulating the distribution and then narrowing down our search space to a specific collection of subsets. Our results show that for a carefully chosen family of sets, the set with minimum expansion can be approximated using polynomially many samples. On the other hand, the minimum expansion convergence to the actual concentration (without the limits on the sets) as the complexity of the collection goes to infinity.  

Before stating our main theorems, we introduce two useful definitions. The following definition captures the concentration function for a specific collection of subsets.

\begin{definition} [Concentration Function for a Collection of Subsets]
\label{def:concentration func}
Consider a metric probability space $(\cX, \mu, d)$. 
Let $\epsilon>0$ and $\alpha\in(0,1)$ be given parameters, then the \emph{concentration function} of the probability measure $\mu$ with respect to $\epsilon$, $\alpha$ and a collection of subsets $\cG \subseteq \mathsf{Pow}(\cX)$ is defined as
\begin{equation*}
h(\mu, \alpha,\epsilon, \cG) = \inf_{\cE\in \cG}\set{\mu(\cE_\epsilon)\colon \mu(\cE)\geq \alpha}.
\end{equation*}
When $\cG=\mathsf{Pow}(\cX)$, we write $h(\mu, \alpha,\epsilon)$ for simplicity.
\end{definition}

We also need to define the notion of complexity penalty for a collection of subsets. The complexity penalty for a collection of subsets captures the rate of the uniform convergence for the subsets in that collection. One can get such uniform convergence rates using the VC dimension or Rademacher complexity of the collection.  

\begin{definition} [Complexity Penalty]
Let $\cG\subseteq \mathsf{Pow}(\cX)$ be a collection of subsets of $\cX$. A function $\phi\colon \N \times \R \to [0,1]$ is a complexity penalty for $\cG$ iff for any probability measure $\mu$ supported on $\cX$ and any $\delta \in [0,1]$, we have
\begin{equation*}\Pr_{S\gets \mu^m}[\exists\:\cE \in \cG ~~\text{s.t.}~ \norm{\mu(\cE) - \hat{\mu}_S(\cE)}\geq \delta] \leq \phi(m,\delta).
\end{equation*}
\end{definition}

Theorem \ref{thm:generalization_of_concentration} shows how to overcome the challenge of measuring concentration from finite samples, when the concentration is defined with respect to specific families of subsets. Namely, it shows that the empirical concentration is close to the true concentration, if the underlying collection of subsets is not too complex. The proof of Theorem \ref{thm:generalization_of_concentration} is provided in Appendix \ref{sec:proof thm generalize concentration}.
\begin{theorem} [Generalization of Concentration]\label{thm:generalization_of_concentration}
Let $(\cX, \mu, d)$ be a metric probability space and $\cG\subseteq \mathsf{Pow}(\cX)$.  For any $\delta, \alpha , \epsilon \in [0,1]$, we have
\begin{equation*}\Pr_{S\gets \mu^m}[h(\mu, \alpha - \delta, \epsilon, \cG) - \delta \leq h(\hat{\mu}_S, \alpha , \epsilon, \cG) \leq h(\mu, \alpha + \delta, \epsilon, \cG) + \delta] \geq 1 - 2\big(\phi(m,\delta) + \phi_\epsilon(m,\delta)\big)
\end{equation*}
where $\phi$ and $\phi_\epsilon$ are complexity penalties for $\cG$ and $\cG_\epsilon$ respectively. 
\end{theorem}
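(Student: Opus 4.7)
The plan is to derive both inequalities in the ``sandwich'' from a single high-probability uniform-convergence event. Define the good event
\begin{equation*}
  \mathcal{E}_{\mathrm{good}} \;=\; \Bigl\{\,\sup_{\cE\in\cG}\bigl|\mu(\cE)-\hat{\mu}_S(\cE)\bigr|<\delta \,\Bigr\}\;\cap\;\Bigl\{\,\sup_{\cA\in\cG_\epsilon}\bigl|\mu(\cA)-\hat{\mu}_S(\cA)\bigr|<\delta\,\Bigr\}.
\end{equation*}
By the definition of a complexity penalty applied to $\cG$ and to $\cG_\epsilon$ and a union bound, $\Pr[\mathcal{E}_{\mathrm{good}}]\ge 1-\phi(m,\delta)-\phi_\epsilon(m,\delta)$, which is at least the claimed $1-2(\phi(m,\delta)+\phi_\epsilon(m,\delta))$. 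The remainder of the proof is deterministic: I will show that on $\mathcal{E}_{\mathrm{good}}$ both inequalities in the sandwich are satisfied.

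\textbf{Upper bound.} To prove $h(\hat{\mu}_S,\alpha,\epsilon,\cG)\le h(\mu,\alpha+\delta,\epsilon,\cG)+\delta$, fix an arbitrary slack $\eta>0$ and, using the infimum in Definition~\ref{def:concentration func}, pick $\cE\in\cG$ with $\mu(\cE)\ge \alpha+\delta$ and $\mu(\cE_\epsilon)\le h(\mu,\alpha+\delta,\epsilon,\cG)+\eta$. On $\mathcal{E}_{\mathrm{good}}$ we have $\hat{\mu}_S(\cE)>\mu(\cE)-\delta\ge\alpha$, so $\cE$ is feasible for the empirical concentration problem at level $\alpha$, and $\hat{\mu}_S(\cE_\epsilon)<\mu(\cE_\epsilon)+\delta\le h(\mu,\alpha+\delta,\epsilon,\cG)+\eta+\delta$. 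Taking $\eta\downarrow 0$ gives the desired inequality.

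\textbf{Lower bound.} To prove $h(\mu,\alpha-\delta,\epsilon,\cG)-\delta\le h(\hat{\mu}_S,\alpha,\epsilon,\cG)$, I run the same argument with $\mu$ and $\hat{\mu}_S$ swapped: fix $\eta>0$ and choose $\cE\in\cG$ with $\hat{\mu}_S(\cE)\ge\alpha$ and $\hat{\mu}_S(\cE_\epsilon)\le h(\hat{\mu}_S,\alpha,\epsilon,\cG)+\eta$. On $\mathcal{E}_{\mathrm{good}}$, $\mu(\cE)>\hat{\mu}_S(\cE)-\delta\ge\alpha-\delta$, so $\cE$ is feasible for the true problem at level $\alpha-\delta$, while $\mu(\cE_\epsilon)<\hat{\mu}_S(\cE_\epsilon)+\delta\le h(\hat{\mu}_S,\alpha,\epsilon,\cG)+\eta+\delta$. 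Taking $\eta\downarrow 0$ completes the argument, and combining with the upper bound yields the sandwich on $\mathcal{E}_{\mathrm{good}}$.

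\textbf{Where to be careful.} The main subtlety is \emph{which} uniform-convergence events are really needed: naively one might try to control only $\cG$ and then bound expansions separately, but $\cE_\epsilon$ for $\cE\in\cG$ need not lie in $\cG$, which is exactly why the theorem's hypothesis provides a separate penalty $\phi_\epsilon$ for $\cG_\epsilon$. A second subtlety is that the infima in the definition of $h$ may not be attained; I handle this by picking near-optimal $\cE$ within slack $\eta$ and sending $\eta\to 0$, which works because all inequalities are preserved in the limit. Boundary cases ($\alpha-\delta\le 0$ or $\alpha+\delta\ge 1$) are harmless: the corresponding concentration values are trivially $0$ or $1$ and the claimed inequalities remain valid.
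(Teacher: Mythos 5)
Your proof is correct and follows essentially the same route as the paper's: uniform convergence over $\cG$ transfers feasibility of near-optimal sets between $\mu$ and $\hat{\mu}_S$, and uniform convergence over $\cG_\epsilon$ transfers the corresponding expansion measures, with the non-attained infima handled by an arbitrarily small slack. The only difference is organizational --- you condition once on a single uniform good event (which even yields the slightly sharper constant $1-\phi(m,\delta)-\phi_\epsilon(m,\delta)$), whereas the paper applies the penalties to the two near-minimizers separately and union-bounds over four deviation events to obtain the stated $1-2\big(\phi(m,\delta)+\phi_\epsilon(m,\delta)\big)$.
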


\begin{remark}
Theorem \ref{thm:generalization_of_concentration} shows that if we narrow down our search to a collection of subsets $\cG$ such that both $\cG$ and $\cG_\epsilon$ have small complexity penalty, then we can use the empirical distribution to measure concentration of measure for that specific collection. 
Note that the generalization bound of Theorem \ref{thm:generalization_of_concentration} depends on complexity penalties for both $\cG$ and $\cG_\eps$. Therefore, in order for this theorem to be useful, the collection $\cG$ must be chosen in a careful way. For example, if $\cG$ has bounded VC dimension, then $\cG_\epsilon$ might still have a very large VC dimension. Alternatively, $\cG$ might denote the collection of subsets that are decidable by a neural network of a certain size. In that case, even though there are well known complexity penalties for such collections (see \cite{neyshabur2017exploring}),  the complexity of their \emph{expansions} is unknown. In fact, relating the complexity penalty for expansion of a collection to that of the original collection is tightly related to generalization bounds in the adversarial settings, which has also been the subject of several recent works \citep{cullina2018pac,attias2018improved, montasser2019vc,yin2018rademacher, raghunathan2019adversarial}.
\end{remark}

The following theorem, proved in Appendix \ref{sec:proof thm main}, states that if we gradually increase the complexity of the collection and the number of samples together, the empirical estimate of concentration converges to actual concentration, as long as several conditions hold. Theorem \ref{thm:main} and the techniques used in its proof are inspired by the work of \cite{scott2006learning} on learning minimum volume sets. 

\begin{theorem}\label{thm:main}
Let $\set{\cG(T)}_{T\in \N}$ be a family of subset collections defined over a space $\cX$. Let $\set{\phi^T}_{T\in \N}$ and $\set{\phi^T_\epsilon}_{T \in \N}$ be two families of complexity penalty functions such that $\phi^T$ and $\phi^T_\epsilon$  are complexity penalties for $\cG(T)$ and $\cG_\epsilon(T)$ respectively, for some $\epsilon\in [0,1]$. Let $\set{m(T)}_{T\in \N}$ and $\set{\delta(T)}_{T\in \N}$ be two sequences such that $m(T)\in \N$ and $\delta(T) \in [0,1]$.  

Consider a sequence of datasets $\set{S_T}_{T \in \N}$, where $S_T$ consists of $m(T)$ i.i.d.\ samples from a measure $\mu$ supported on $\cX$. Also let $\alpha \in [0,1]$ be such that $h$ is locally continuous w.r.t the second parameter at point $(\mu, \alpha, \epsilon, \mathsf{Pow}(\cX))$.
If all the following  hold,
\begin{enumerate}
    \item 
    $\sum_{T=1}^\infty \phi^T(m(T), \delta(T)) < \infty$
    \item $\sum_{T=1}^\infty \phi_\epsilon^T(m(T), \delta(T)) < \infty$
    \item $\lim_{T\to \infty} \delta(T) = 0$
    \item $\lim_{T \to \infty} h(\mu, \alpha, \epsilon, \cG(T)) = h(\mu, \alpha, \epsilon)$
\end{enumerate}
then with probability $1$, we have $\lim_{T \to \infty} h(\hat{\mu}_{S_T}, \alpha, \epsilon, {\cG(T)}) = h(\mu, \alpha, \epsilon).$
\end{theorem}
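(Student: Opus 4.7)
The plan is to apply the classic Borel--Cantelli template used by \cite{scott2006learning} in the analogous context of minimum-volume-set estimation. At a high level: use Theorem~\ref{thm:generalization_of_concentration} to obtain a per-$T$ high-probability sandwich for $h(\hat\mu_{S_T},\alpha,\eps,\cG(T))$, sum the per-$T$ failure probabilities via hypotheses~1 and~2 to kill all but finitely many ``bad'' events, and then let $T\to\infty$ using hypotheses~3, 4 together with local continuity to squeeze both endpoints of the sandwich to $h(\mu,\alpha,\eps)$.

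Concretely, Theorem~\ref{thm:generalization_of_concentration} applied to $\cG(T)$ with $m(T)$ samples and tolerance $\delta(T)$ gives, with probability at least $1-2\bigl(\phi^T(m(T),\delta(T))+\phi_\eps^T(m(T),\delta(T))\bigr)$,
\[
h(\mu,\alpha-\delta(T),\eps,\cG(T)) - \delta(T) \;\leq\; h(\hat\mu_{S_T},\alpha,\eps,\cG(T)) \;\leq\; h(\mu,\alpha+\delta(T),\eps,\cG(T)) + \delta(T).
\]
Calling the complementary event $B_T$, hypotheses~1 and~2 give $\sum_T \Pr[B_T] < \infty$, so by the first Borel--Cantelli lemma only finitely many $B_T$ occur almost surely. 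Fixing such a sample path reduces the theorem to the deterministic claim that both endpoints of the above sandwich converge to $h(\mu,\alpha,\eps)$ as $T\to\infty$.

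For the lower endpoint this is direct: the inclusion $\cG(T)\subseteq \mathsf{Pow}(\cX)$ gives $h(\mu,\alpha-\delta(T),\eps,\cG(T))\geq h(\mu,\alpha-\delta(T),\eps)$, which by local continuity of $h(\mu,\cdot,\eps)$ at $\alpha$ and $\delta(T)\to 0$ tends to $h(\mu,\alpha,\eps)$; the additive $-\delta(T)$ vanishes. The main obstacle is the upper endpoint $h(\mu,\alpha+\delta(T),\eps,\cG(T))$: condition~4 is stated pointwise at $\alpha$, and monotonicity of $h$ in its second argument bounds the perturbed quantity \emph{below}, not above, by $h(\mu,\alpha,\eps,\cG(T))$. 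My strategy, for any fixed $\eta>0$, is to pick $\alpha^+\in(\alpha,\alpha+\delta_0]$ with $h(\mu,\alpha^+,\eps)\leq h(\mu,\alpha,\eps)+\eta$ (possible by local continuity), observe that $\delta(T)\leq \alpha^+-\alpha$ eventually so monotonicity in the mass argument yields $h(\mu,\alpha+\delta(T),\eps,\cG(T))\leq h(\mu,\alpha^+,\eps,\cG(T))$, and then conclude $\limsup_T h(\mu,\alpha+\delta(T),\eps,\cG(T))\leq h(\mu,\alpha,\eps)+\eta$; sending $\eta\downarrow 0$ completes the proof. The technically delicate point is justifying the use of condition~4 at $\alpha^+$ rather than at $\alpha$: this is clean if $\{\cG(T)\}$ is nested in $T$ (which the informal text ``gradually increase the complexity of the collection'' suggests), since then $h(\mu,\alpha^+,\eps,\cG(T))$ is non-increasing in $T$ and pinched from above by $h(\mu,\alpha^+,\eps)$; otherwise one must either approximate a near-optimal subset of $\mathsf{Pow}(\cX)$ for $h(\mu,\alpha^+,\eps)$ by sets in $\cG(T)$ directly, or strengthen condition~4 to hold throughout a neighborhood of $\alpha$.
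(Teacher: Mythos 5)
Your proposal follows essentially the same route as the paper's proof: apply Theorem~\ref{thm:generalization_of_concentration} at each $T$, sum the failure probabilities via conditions~1 and~2, invoke Borel--Cantelli so that the two-sided sandwich holds for all large $T$ almost surely, and then pass to the limit using conditions~3, 4 and local continuity. Your handling of the lower endpoint is in fact cleaner than the paper's: using $\cG(T)\subseteq\mathsf{Pow}(\cX)$ to get $h(\mu,\alpha-\delta(T),\eps,\cG(T))\geq h(\mu,\alpha-\delta(T),\eps)$ and then applying local continuity avoids condition~4 entirely on that side, whereas the paper replaces the diagonal limit $\lim_T h(\mu,\alpha-\delta(T),\eps,\cG(T))$ by an iterated limit and applies condition~4 at the shifted points $\alpha\pm\delta(T_1)$ --- steps it does not justify. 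You have correctly isolated the one genuinely delicate point, namely that condition~4 is stated only at $\alpha$ while the upper endpoint needs it near $\alpha$.

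However, your proposed repair via nestedness is wrong as stated. Since $\cG(T)\subseteq\mathsf{Pow}(\cX)$, the restricted infimum satisfies $h(\mu,\alpha^+,\eps,\cG(T))\geq h(\mu,\alpha^+,\eps)$; it is bounded \emph{below}, not ``pinched from above,'' by the unrestricted value. Nestedness of $\{\cG(T)\}$ makes the sequence non-increasing in $T$, so it converges, but its limit can strictly exceed $h(\mu,\alpha^+,\eps)$ --- that the limit equals $h(\mu,\alpha^+,\eps)$ is precisely condition~4 at $\alpha^+$, which is what you were trying to avoid assuming. The two honest fixes are the other options you name: either strengthen condition~4 to hold in a neighborhood of $\alpha$ (this is what the paper implicitly uses, and it is harmless since in the instantiations condition~4 is verified via universal approximation, which holds at every mass level), or approximate a near-optimal set for $h(\mu,\alpha^+,\eps)$ directly by sets in $\cG(T)$, which is the same universal-approximation argument. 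With either of those in place your argument closes; with only the nestedness claim it does not.
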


\begin{remark}
In Theorem \ref{thm:main}, the first two conditions restrict the growth rate for the complexity of the collections. Namely, we need the complexity penalties $\phi^T(m(T), \delta(T))$ and  $\phi_\epsilon^T(m(T), \delta(T))$ to rapidly approach $0$ as $T \to \infty$, which means the complexity of $\cG(T)$ and $\cG_\epsilon(T)$ should grow at a slow rate. The third condition requires that our generalization error goes to zero as we increase $T$. Note that the complexity penalty is a decreasing function with respect to $\delta$, which means condition 3 makes  achieving the first two conditions harder. However, since the complexity penalty is a function of both $\delta$ and sample size, we can still increase the sample size with a faster rate to satisfy the first two conditions. Finally, the fourth condition requires our approximation error goes to 0 as we increase $T$. Note that this condition holds for any family of collections of subsets that is a universal approximator (e.g., decision trees or neural networks). However, in order for our theorem to hold, we also need all the other conditions. In particular, we cannot use decision trees or neural networks as our collection of subsets, because we do not know if there is a complexity penalty for them that satisfies condition 2.  
\end{remark}

\subsection{Special Case of $\ell_\infty$}

In this subsection, we show how to instantiate Theorem~\ref{thm:main} for the case of $\ell_\infty$. Below, we introduce a special collection of subsets characterized by the \emph{complement of a union of hyperrectangles}:

\begin{definition}[Complement of union of hyperrectangles]
\label{def:comp union rects}
For any positive integer $T$, the collection of subsets specified by the \emph{complement of a union of $T$ $n$-dimensional hyperrectangles} is defined as
\begin{align*}
    \cC\cR(T,n) = \Big\{\R^n\setminus\cup_{t=1}^T \Rect(\bu^{(t)},\br^{(t)})\colon \forall t\in[T],  (\bu^{(t)},\br^{(t)})\in\RR^{n}\times\RR_{\geq 0}^n \Big\},
\end{align*}
where $\Rect(\bu,\br) = \big\{\bx\in\cX:  \forall j\in[n], |x_j-u_j|\leq r_j/2\big\}$ denotes the hyperrectangle centered at $\bu$ with $\br$ representing the edge size vector. When $n$ is free of context, we simply write $\cC\cR(T)$.
\end{definition}

Recall that our goal is to find a subset $\cE\in\R^n$ such that $\cE$ has measure at least $\alpha$ and the $\epsilon_\infty$-expansion of $\cE$ under $\ell_\infty$ has the minimum measure. 
To achieve this goal, we approximate the distribution $\mu$ with an empirical distribution $\hat{\mu}_S$, and limit our search to the special collection $\cC\cR(T)$ (though our goal is to find the minimum concentration around arbitrary subsets). 
Namely, what we find is still an \emph{upper bound} on the concentration function, and it is an upper bound that we know it converges the actual value in the limit. 
Our problem thus becomes the following optimization task:
\begin{align}
\label{eq:empirical opt}
    \minimize_{\cE\in\cC\cR(T)}\:\: \hat\mu_{\cS}(\cE_{\epsilon_\infty}) \quad \text{subject to} \:\: \hat\mu_{\cS}(\cE) \geq \alpha.
\end{align}
 
The following theorem provides the key to our empirical method by providing a convergence guarantee. It states that if we increase the number of rectangles and the number of samples together in a careful way, the solution to the  problem using restricted sets converges to the true concentration. 

\begin{theorem}\label{thm:main_hyperrectangle}
Consider a nice metric probability space $(\R^n, \mu, \ell_\infty)$. Let $\set{S_T}_{T\in \N}$ be a family of datasets such that for all $T\in \N$, $S_T$ contains at least $T^4$ i.i.d.\ samples from $\mu$. For any $\epsilon_\infty$ and $\alpha\in [0,1]$, if $h$ is locally continuous w.r.t the second parameter at point $(\mu,\alpha, \epsilon_\infty)$, then with probability $1$ we get 
\begin{equation*}\lim_{T \to \infty} h(\hat{\mu}_{S_T}, \alpha,\epsilon_\infty, \cC\cR(T)) = h(\mu,\alpha,\epsilon_\infty).
\end{equation*}
\end{theorem}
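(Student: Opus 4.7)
The plan is to invoke Theorem~\ref{thm:main} with the family $\cG(T) = \cC\cR(T)$, sample sizes $m(T) = T^4$, and tolerance $\delta(T) = 1/T$. The local-continuity hypothesis of Theorem~\ref{thm:main} is exactly the hypothesis of Theorem~\ref{thm:main_hyperrectangle}, so the proof reduces to verifying the four numbered conditions of Theorem~\ref{thm:main} for these choices.

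For conditions 1 and 2, I would bound the VC dimensions of both $\cC\cR(T)$ and $\cC\cR(T)_{\eps_\infty}$. Each element of $\cC\cR(T)$ is determined by the $2nT$ real numbers $(\bu^{(t)}, \br^{(t)})_{t=1}^T$, and the indicator that $\bx \in \cE$ is a Boolean combination of $O(Tn)$ axis-aligned comparisons. Crucially, under $\ell_\infty$ the ball $\Ball(\bx,\eps_\infty)$ is itself an axis-aligned cube, so membership in $\cE_{\eps_\infty}$ for $\cE \in \cC\cR(T)$ also reduces to a Boolean combination of $O(Tn)$ axis-aligned comparisons on the same $2nT$ parameters. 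Standard VC bounds for parametric Boolean combinations then give $\mathrm{VCdim}(\cC\cR(T)),\,\mathrm{VCdim}(\cC\cR(T)_{\eps_\infty}) = O(Tn\log(Tn))$. Combined with the Vapnik--Chervonenkis uniform-convergence inequality, $\phi^T(m,\delta) \leq c_1 m^{d(T)} \exp(-c_2 m\delta^2)$, and plugging in $m(T)=T^4$, $\delta(T)=1/T$, $d(T)=O(Tn\log T)$ yields a bound of the form $\exp(O(Tn\log^2 T) - \Omega(T^2))$, which is summable. Condition 3 is immediate.

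Condition 4, the approximation statement $\lim_T h(\mu,\alpha,\eps_\infty,\cC\cR(T)) = h(\mu,\alpha,\eps_\infty)$, is the main technical step. Since $\cC\cR(T) \subseteq \mathsf{Pow}(\R^n)$, we have $h(\mu,\alpha,\eps_\infty,\cC\cR(T)) \geq h(\mu,\alpha,\eps_\infty)$ automatically, so it suffices to prove $\limsup_T h(\mu,\alpha,\eps_\infty,\cC\cR(T)) \leq h(\mu,\alpha,\eps_\infty)$. Fix $\eta>0$ and pick $\cE^* \subseteq \R^n$ with $\mu(\cE^*)\geq \alpha$ and $\mu(\cE^*_{\eps_\infty}) \leq h(\mu,\alpha,\eps_\infty) + \eta$. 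The key observation is that the complement $B^* := \R^n \setminus \cE^*_{\eps_\infty} = \{\bx : d(\bx,\cE^*) > \eps_\infty\}$ is \emph{open} in $\R^n$, and every open subset of $\R^n$ is a countable union of open rectangles. Writing $B^* = \bigcup_{i=1}^\infty R_i$ and letting $B^*_T = \bigcup_{i=1}^T R_i$, we have $B^*_T \subseteq B^*$ and $\mu(B^*_T)\uparrow\mu(B^*)$. Now define $A_T := (B^*_T)_{\eps_\infty}$; since the $\ell_\infty$-dilation of a rectangle is again a rectangle, $A_T$ is a union of $T$ rectangles and so $\cE_T := \R^n \setminus A_T \in \cC\cR(T)$. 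Two inclusions close the argument: first, $A_T \subseteq (B^*)_{\eps_\infty} \subseteq \R^n\setminus \cE^*$ (because any point within $\eps_\infty$ of some $\bm{y}$ with $\Ball(\bm{y},\eps_\infty)\cap\cE^*=\emptyset$ lies in $\Ball(\bm{y},\eps_\infty)$, hence outside $\cE^*$), so $\cE_T \supseteq \cE^*$ and $\mu(\cE_T)\geq\alpha$ (feasibility); second, every $\bm{y}\in B^*_T$ has $\Ball(\bm{y},\eps_\infty) \subseteq A_T$, so $\R^n\setminus\cE_{T,\eps_\infty} \supseteq B^*_T$, which gives
\[
\mu(\cE_{T,\eps_\infty}) \leq 1 - \mu(B^*_T) \xrightarrow{T\to\infty} 1 - \mu(B^*) = \mu(\cE^*_{\eps_\infty}) \leq h(\mu,\alpha,\eps_\infty)+\eta.
\]
Hence $\limsup_T h(\mu,\alpha,\eps_\infty,\cC\cR(T)) \leq h(\mu,\alpha,\eps_\infty)+\eta$, and letting $\eta\to 0$ verifies condition 4.

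The principal obstacles are the VC-dimension bound on $\cC\cR(T)_{\eps_\infty}$ and the inner rectangle approximation of $B^*$. Both are overcome by exploiting the special structure of $\ell_\infty$: balls are themselves hyperrectangles, which keeps $\cC\cR(T)_{\eps_\infty}$ parameterized by the same $2nT$ coordinates and also makes $A_T = (B_T^*)_{\eps_\infty}$ remain in $\cC\cR(T)$ after dilation. A subtlety is that the measure-theoretic manipulations (the monotone-convergence step $\mu(B^*_T)\uparrow\mu(B^*)$ and the existence of a countable open-rectangle cover of $B^*$) require enough regularity on $\mu$; we take this to be part of the unstated ``nice'' hypothesis on $(\R^n,\mu,\ell_\infty)$.
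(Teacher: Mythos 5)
Your overall architecture coincides with the paper's: both proofs instantiate Theorem~\ref{thm:main} with $\cG(T)=\cC\cR(T)$, $m(T)=T^4$, $\delta(T)=1/T$, verify conditions 1--3 via an $O(nT\log T)$ VC bound plus the VC inequality (your summability computation matches the paper's), and note condition 3 is trivial. Where you genuinely diverge is condition 4, which you correctly single out as the main technical step: the paper disposes of it in one line by citing the universal consistency of histogram rules (Devroye et al., Ch.~9), whereas you give a self-contained geometric argument --- approximate the open complement $B^*$ of the near-optimal expansion from inside by finitely many rectangles, dilate those rectangles by $\epsilon_\infty$ (dilation, unlike erosion, commutes with unions and maps rectangles to rectangles), and take the complement. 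This is a real contribution of your write-up: it makes explicit exactly where the $\ell_\infty$ structure enters, and it cleanly sidesteps the erosion issue that the paper's own proof glosses over when it asserts $\cG_\epsilon(T)\subset\cG(T)$ (the erosion of a union of rectangles need not be a union of $T$ rectangles). The cost is that your argument surfaces boundary subtleties the citation hides: $B^*=\R^n\setminus\cE^*_{\epsilon_\infty}$ equals $\set{x: d(x,\cE^*)>\epsilon_\infty}$ and is open only up to the set $\set{x:d(x,\cE^*)=\epsilon_\infty}$ (the expansion uses closed balls), and your dilated rectangles are open while $\cC\cR(T)$ is built from closed ones; both issues are measure-zero-boundary matters that you reasonably fold into the undefined ``nice'' hypothesis, which is no worse than what the paper does. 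One small imprecision on conditions 1--2: membership in $\cE_{\epsilon_\infty}$ is \emph{not} literally a Boolean combination of only $O(Tn)$ comparisons of bounded formula size (containment of a cube in a union of boxes is not a disjunction of containments); what is true, and what suffices for the $O(Tn\log(Tn))$ bound via sign-pattern counting, is that the class is determined by $O(Tn)$ atomic axis-aligned predicates in $2nT$ parameters, regardless of the size of the Boolean formula combining them --- you should phrase it that way.
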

Note that the size of $S_T$ is selected as $T^4$ to guarantee conditions 1 and 2 are satisfied in Theorem \ref{thm:main}. In fact, we can tune the parameters more carefully to get $T^2$, instead of $T^4$, but the convergence will be slower. 
See Appendix \ref{sec:proof thm rect} for the proof. 

\subsection{Special Case of $\ell_2$}

This subsection demonstrates how to apply Theorem \ref{thm:main} to the case of $\ell_2$. The following definition introduces the collection of subsets characterized by a \emph{union of balls}:

\begin{definition}[Union of Balls]
\label{def:union balls}
For any positive integer $T$, the collection of subsets specified by a \emph{union of $T$ $n$-dimensional balls} is defined as
\begin{align*}
    \cB(T,n) = \Big\{\cup_{t=1}^T \Ball(\bu^{(t)},\br^{(t)})\colon \forall t\in[T],  (\bu^{(t)},\br^{(t)})\in\RR^{n}\times\RR_{\geq 0}^n \Big\}.
\end{align*}
When $n$ is free of context, we simply write $\cB(T)$.
\end{definition}

By restricting our search to the collection of a union of balls $\cB(T)$ and replacing the underlying distribution $\mu$ with the empirical one $\hat\mu_\cS$, our problem becomes the following optimization task  
\begin{align}
\label{eq:empirical opt l2}
    \minimize_{\cE\in\cB(T)}\:\: \hat\mu_{\cS}(\cE_{\epsilon_2}) \quad \text{subject to} \:\: \hat\mu_{\cS}(\cE) \geq \alpha.
\end{align}

Theorem \ref{thm:main_ball}, proven in Appendix \ref{sec:proof thm ball}, guarantees that if we increase the number of balls and samples together in a careful way, the solution to the empirical problem \eqref{eq:empirical opt l2} converges to the true concentration. 

\begin{theorem}\label{thm:main_ball}
Consider a nice metric probability space $(\R^n, \mu, \ell_2)$. Let $\set{S_T}_{T\in \N}$ be a family of datasets such that for all $T\in \N$, $S_T$ contains at least $T^4$ i.i.d.\ samples from $\mu$. For any $\epsilon_2$ and $\alpha \in [0,1]$, if $h$ is locally continuous w.r.t the second parameter at point $(\mu,\alpha, \epsilon_2)$, then with probability $1$ we get 
\begin{equation*}
\lim_{T \to \infty} h(\hat{\mu}_{S_T}, \alpha,\epsilon_2, \cB(T)) = h(\mu,\alpha,\epsilon_2).
\end{equation*}
\end{theorem}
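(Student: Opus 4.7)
My plan is to derive Theorem \ref{thm:main_ball} as a direct application of the general Theorem \ref{thm:main} with $\cG(T) = \cB(T)$, a VC-theoretic complexity penalty, and the prescribed sample size $m(T) = T^4$. The key structural property that makes unions of balls the right collection for the $\ell_2$ metric is that expansion acts cleanly: for any $\cE = \bigcup_{t=1}^{T} \Ball(\bu^{(t)}, r^{(t)}) \in \cB(T)$, one has $\cE_{\epsilon_2} = \bigcup_{t=1}^{T} \Ball(\bu^{(t)}, r^{(t)} + \epsilon_2)$, so $\cB(T)_{\epsilon_2} \subseteq \cB(T)$. Consequently, any complexity penalty for $\cB(T)$ is automatically a complexity penalty for $\cB(T)_{\epsilon_2}$, and conditions 1 and 2 of Theorem \ref{thm:main} collapse into a single requirement on $\cB(T)$ alone.

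For the complexity penalty itself, I would invoke standard VC theory. A single ball in $\R^n$ has VC dimension $n+1$, and by the classical Blumer--Ehrenfeucht--Haussler--Warmuth bound for Boolean combinations, $\mathrm{VC}(\cB(T,n)) = O(Tn \log T)$. Writing $d(T)$ for this dimension, the Vapnik--Chervonenkis uniform convergence inequality supplies a penalty of the shape $\phi^T(m,\delta) \leq 8\,(em/d(T))^{d(T)}\exp(-m\delta^2/32)$, and the same bound applies to $\phi^T_{\epsilon_2}$. Setting $\delta(T) = T^{-1/4}$ together with $m(T) = T^4$, the exponent $-T^{7/2}/32$ dominates the polynomial prefactor $(eT^4/d(T))^{d(T)} = \exp\!\bigl(\widetilde{O}(Tn)\bigr)$, so both series are summable. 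Combined with $\delta(T) \to 0$, this verifies conditions 1, 2, and 3.

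The remaining step, and the main obstacle, is condition 4: that $\lim_{T\to\infty} h(\mu,\alpha,\epsilon_2, \cB(T)) = h(\mu,\alpha,\epsilon_2)$. My plan is to fix $\eta > 0$ and select a near-optimal measurable $\cE^\star$ with $\mu(\cE^\star) \geq \alpha$ and $\mu(\cE^\star_{\epsilon_2}) < h(\mu,\alpha,\epsilon_2) + \eta/2$. By inner regularity of Borel probability measures on $\R^n$, I can find a compact $K \subseteq \cE^\star$ with $\mu(\cE^\star \setminus K)$ arbitrarily small. Then I cover $K$ by balls of some radius $\kappa$ centered at a finite $\kappa$-net of $K$; this produces a set $\cE_T \in \cB(T)$ provided $T$ is at least the $\kappa$-packing number of $K$. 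Since $K \subseteq \cE_T$, the measure constraint $\mu(\cE_T) \geq \alpha$ is preserved after absorbing the inner-regularity loss through the local continuity of $h$ in $\alpha$. Meanwhile $\cE_{T,\epsilon_2} \subseteq K_{\epsilon_2 + \kappa} \subseteq \cE^\star_{\epsilon_2 + \kappa}$, whose measure approaches $\mu(\cE^\star_{\epsilon_2})$ as $\kappa \to 0$ by continuity of the expansion operation for Borel measures on $\R^n$.

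The primary delicacy lies in coupling $\kappa$ with $T$: the $\kappa$-packing number of $K$ must stay bounded by $T$, which forces $\kappa = \kappa(T) \to 0$ sufficiently slowly, while at the same time $\kappa$ must go to zero quickly enough that $\mu(\cE^\star_{\epsilon_2 + \kappa}) - \mu(\cE^\star_{\epsilon_2}) \to 0$. Both requirements are exactly the kind of regularity encoded in the \emph{nice} metric probability space assumption together with the local continuity of $h$ at $(\mu,\alpha,\epsilon_2)$, so these hypotheses are the precise tools needed. Overall, the argument mirrors the architecture of Theorem \ref{thm:main_hyperrectangle}, with spherical expansion replacing axis-aligned expansion, and the $\ell_2$ case is if anything simplified by the especially clean stability $\cB(T)_{\epsilon_2} \subseteq \cB(T)$.
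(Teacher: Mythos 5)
Your reduction to Theorem~\ref{thm:main} matches the paper's proof almost exactly for conditions 1--3: the paper likewise observes that $\cB(T)_{\epsilon_2}\subseteq\cB(T)$ (so one VC-type penalty serves for both $\cG(T)$ and $\cG_{\epsilon_2}(T)$), uses the $O(nT\log T)$ bound on the VC dimension of $T$-fold unions of balls, and plugs in $m(T)\geq T^4$ with $\delta(T)\to 0$ to make the two series summable; your choice $\delta(T)=T^{-1/4}$ versus the paper's $1/T$ is immaterial. Where you genuinely diverge is condition 4. The paper disposes of it in one line by citing the universal consistency of kernel-based rules (Devroye et al., Ch.~10), whereas you give a self-contained approximation argument: inner regularity to pass to a compact $K\subseteq\cE^\star$, a finite $\kappa$-net of $K$ to build a member of $\cB(T)$, monotone convergence $\mu(\cE^\star_{\epsilon_2+\kappa})\downarrow\mu(\cE^\star_{\epsilon_2})$ as $\kappa\downarrow 0$, and a diagonalization coupling $\kappa$ with $T$. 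This buys transparency about exactly which regularity of $(\R^n,\mu,\ell_2)$ is being used, at the cost of more bookkeeping. One small wrinkle in your sketch: covering only $K$ yields $\mu(\cE_T)\geq\alpha-\tau$, not $\geq\alpha$, and local continuity of $h$ is assumed for the \emph{unrestricted} concentration function, not for $h(\cdot,\cdot,\cdot,\cB(T))$; the clean fix is to start from a near-optimal set for the threshold $\alpha+\tau$, so that the compact core already has measure at least $\alpha$, and then let $\tau\to 0$ using the assumed continuity of $h(\mu,\cdot,\epsilon_2)$. With that adjustment your argument for condition 4 goes through and is a valid, more elementary substitute for the paper's citation.
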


\section{Experiments}
\label{sec:experiments}

In this section, we provide heuristic methods to find the best possible error region, which covers at least $\alpha$ fraction of the samples and its expansion covers the least number of points, for both $\ell_\infty$ and $\ell_2$ settings. Specifically, we first introduce our algorithm, then evaluate our approach on two benchmark image datasets:  MNIST~\citep{lecun2010mnist} and CIFAR-10~\citep{krizhevsky2009learning}. Note that in our experiments we exactly use the collection of subsets as suggested by our theoretical results in the previous section. However, that is not necessary and one might work with any subset collection to run experiments, as long as they can estimate the measure of the sets and their expansion. We tried working with other collection of subsets that we do not have theoretical support for (e.g. sets defined by a neural network) and observed a large generalization gap. This observation shows the importance of working with subset collections that we can theoretically control their generalization penalty. 

\subsection{Experiments for $\ell_\infty$}

Theorem \ref{thm:main_hyperrectangle} shows that the empirical concentration function $h(\hat\mu_\cS, \alpha, \epsilon_\infty, \cC\cR(T))$ converges to the actual concentration $h(\mu, \alpha, \epsilon_\infty)$ asymptotically, when $T$ and $|\cS|$ go to infinity with $|\cS|\geq T^4$. Thus, to measure the concentration of $\mu$, it remains to solve the optimization problem \eqref{eq:empirical opt}. 

\shortsection{Method}
Although the collection of subsets is specified using simple topology, solving \eqref{eq:empirical opt} exactly is still difficult, as the problem itself is combinatorial in nature. Borrowing techniques from clustering, we propose an empirical method to search for desirable error region within $\cC\cR(T)$. Any error region $\cE$ could be used to define $f_{\cE}$,  i.e., $f_\cE(\bx) = f^*(\bx), \text{ if } \bx\notin\cE;\:f_{\cE}(\bx)\neq f^*(\bx), \text{ if }\bx\in\cE$. However, finding a classifier corresponding to $f_{\cE}$ using a learning algorithm might be a very difficult task. Here, we find the optimally robust error region, not the corresponding classifier.
A desirable error region should have small adversarial risk\footnote{The adversarial risk of an error region $\cE$ simply refers to the adversarial risk of $f_{\cE}$.}, compared with all subsets in $\cC\cR(T)$ that have measure at least $\alpha$. 

The high-level intuition is that images from different classes are likely to be concentrated in separable regions, since it is generally believed that small perturbations preserve the ground-truth class at the sampled images. Therefore, if we cluster all the images into different clusters, a desired region with low adversarial risk should exclude any image from the dense clusters, otherwise the expansion of such a region will quickly cover the whole cluster. In other words, a desirable subset within $\cC\cR(T)$ should be $\epsilon_\infty$ away (in $\ell_\infty$ norm) from all the dense image clusters, which motivates our method to cover the dense image clusters using hyperrectangles and treat the complement of them as error set.

More specifically, our algorithm (for pseudocode, see Algorithm \ref{alg:heuristic search} in Appendix \ref{sec:algorithms}) starts by sorting all the training images in an ascending order based on the $\ell_1$-norm distance to the $k$-th nearest neighbour with $k=50$, and then obtains $T$ hyperrectangular image clusters by performing $k$-means clustering~\citep{hartigan1979algorithm} on the top-$q$ densest images, where the metric is chosen as $\ell_1$ and the maximum iterations is set as $30$. Finally, we perform a binary search over $q\in[0,1]$, where we set $\delta_{\text{bin}}=0.005$ as the stopping criteria, to obtain the best robust subset (lowest adversarial risk) in $\cC\cR(T)$ with empirical measure at least $\alpha$.

\shortsection{Results}
We choose $\alpha$ to reflect the best accuracy achieved by state-of-the-art classifiers,
using $\alpha=0.01$ and $\epsilon_\infty\in\{0.1, 0.2, 0.3, 0.4\}$ for MNIST and selecting appropriate values to represent the best typical results on the other datasets (see Table~\ref{table:main exp results}).
Given the number of hyperrectangles, $T$, we obtain the resulting error region using the proposed algorithm on the training dataset, and tune $T$ for the minimum adversarial risk on the testing dataset.

\begin{figure*}[tb]
  \centering
    \subfigure[varying $q$]{
    \label{fig:varying q}
    \includegraphics[width=0.46\textwidth]{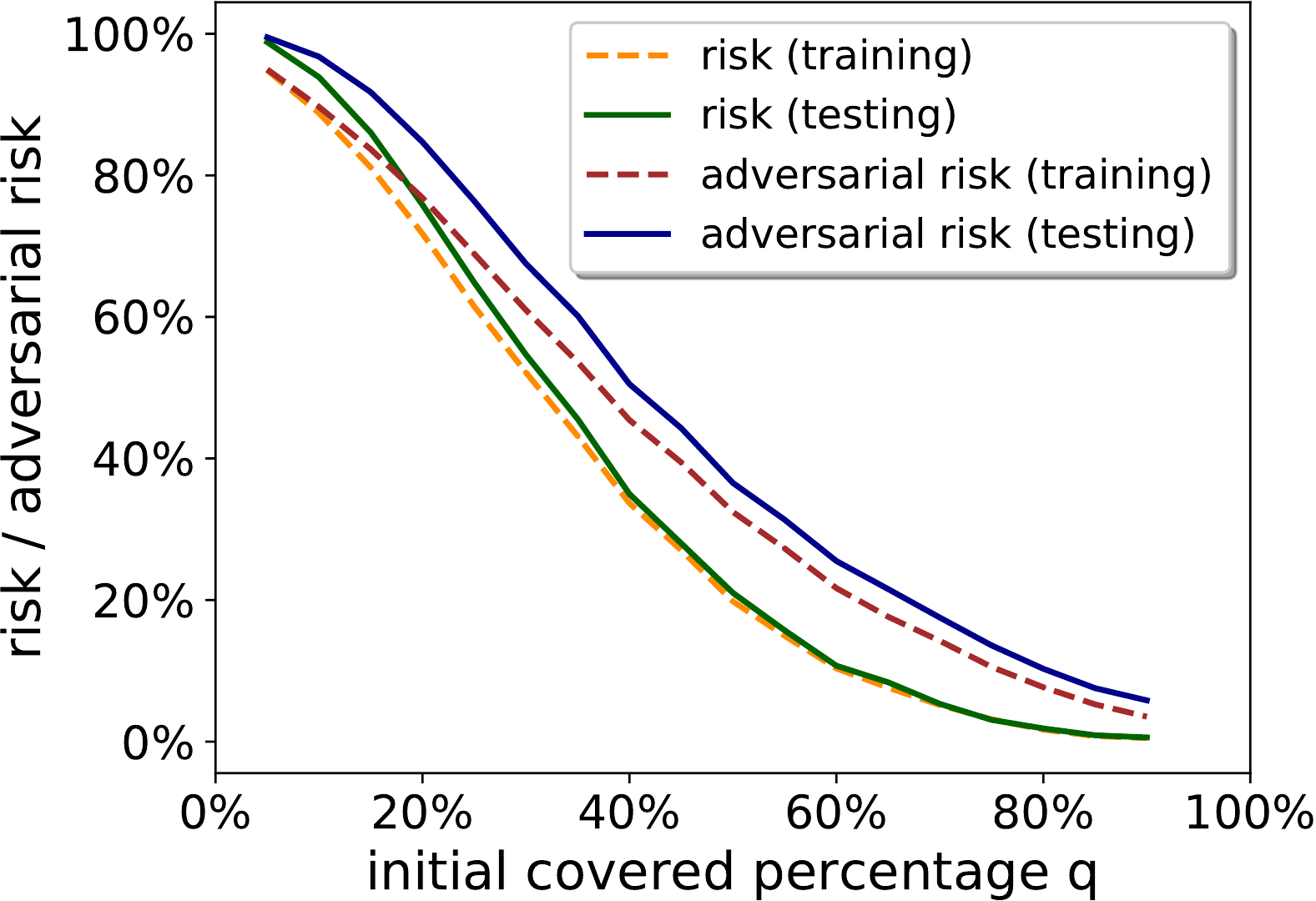}}
    \hspace{0.2in}
    \subfigure[varying $T$]{
    \label{fig:varying S}
    \includegraphics[width=0.43\textwidth]{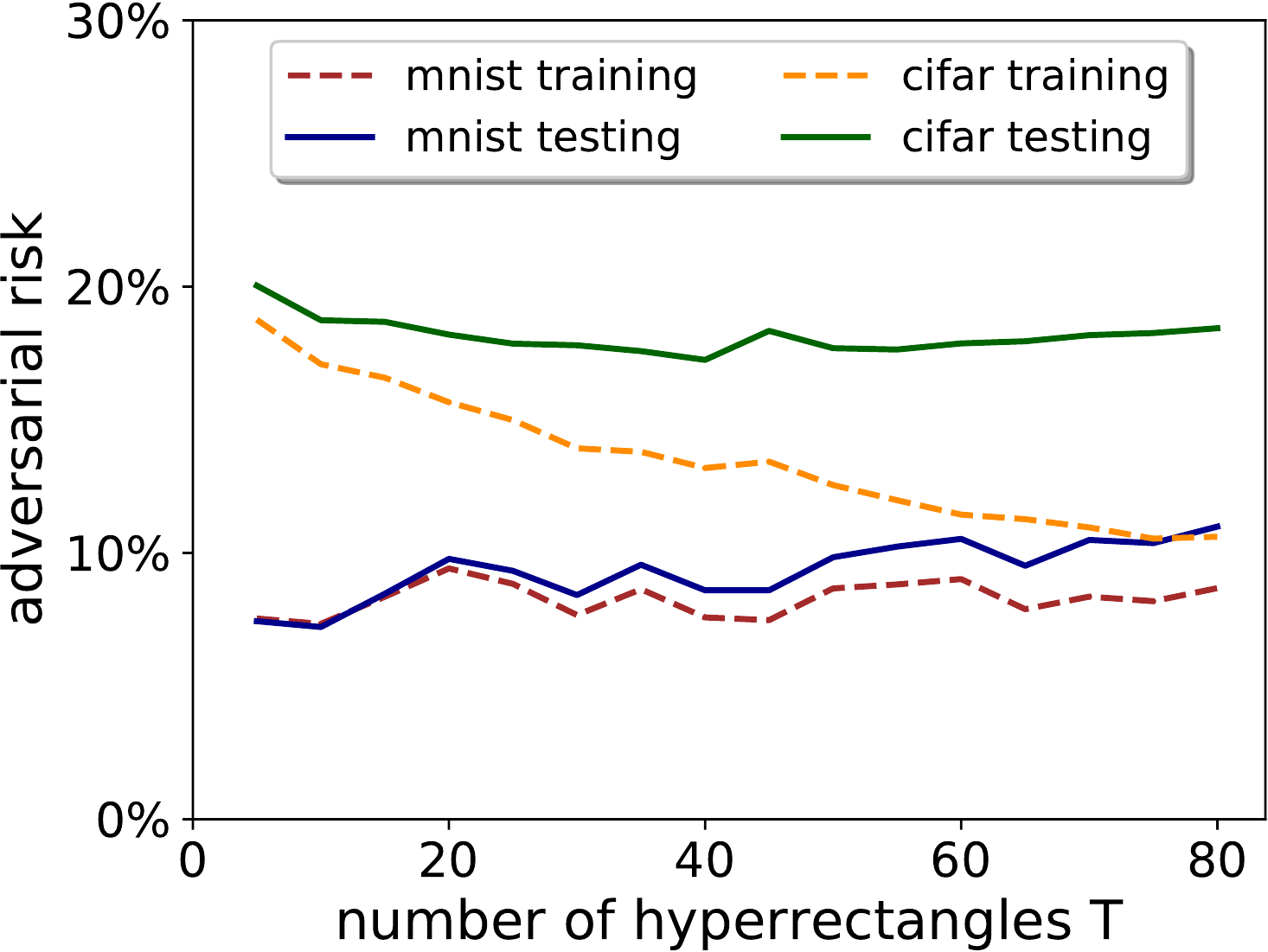}}
  \vskip -1mm
  \caption{ (a) Plots of risk and adversarial risk w.r.t. the resulted error region using our method as $q$ varies (CIFAR-10, $\epsilon_\infty=8/255$, $T=30$); (b) Plots of adversarial risk w.r.t. the resulted error region using our method (best $q$) as $T$ varies on MNIST ($\epsilon_\infty=0.3$) and CIFAR-10 ($\epsilon_\infty=8/255$).}\label{fig:varying both cifar}
\end{figure*}

\begin{table*}[tb]
\vskip -3mm
\caption{Summary of the main results using our method for different settings with $\ell_\infty$ perturbations.}\label{table:main exp results}
\vskip -2mm 
\small{
\begin{center}
	\begin{tabular}{lcccccccc}
		\toprule
		\multirow{2}{*}[-2pt]{\textbf{Dataset}} & 
		\multirow{2}{*}[-2pt]{$\bm\alpha$} &
		\multirow{2}{*}[-2pt]{$\bm\epsilon_\infty$} &
		\multirow{2}{*}[-2pt]{$\bm{T}$} & \multirow{2}{*}[-2pt]{\textbf{Best} $\bm{q}$}
		& \multicolumn{2}{c}{\textbf{Empirical Risk} $\bm{(\%)}$} 
		& \multicolumn{2}{c}{\textbf{Empirical AdvRisk} $\bm{(\%)}$} 
		\\ 
		\cmidrule(l){6-7}
		\cmidrule(l){8-9}
		& & & & & training & testing & training & testing \\
		\midrule
		\multirow{4}{*}[-2pt]{\small MNIST} & 	\multirow{4}{*}[-2pt]{$0.01$} & $0.1$ & $5$ & $0.662$ & $1.22\pm 0.11$ & $1.23\pm 0.12$ & $3.65\pm 0.29$ & $3.64\pm 0.30$ \\
		& & $0.2$ & $10$ & $0.660$ & $1.12\pm 0.13$ & $1.11\pm 0.10$ & $5.76\pm 0.38$ & $5.89\pm 0.44$ \\
	    & & $0.3$ & $10$ & $0.629$ & $1.12\pm 0.12$ & $1.15\pm 0.13$ & $7.34\pm 0.38$ & $7.24\pm 0.38$ \\
	    & & $0.4$ & $10$ & $0.598$ & $1.15\pm 0.09$ & $1.21\pm 0.09$ & $9.89\pm 0.57$ & $9.92\pm 0.60$ \\
	    \midrule
		\multirow{4}{*}[-2pt]{\small CIFAR-10} & \multirow{4}{*}[-2pt]{$0.05$} & $2/255$ & $10$ & $0.680$ & $5.32\pm 0.21$ & $5.72\pm 0.25$ & $7.29\pm 0.20$ & $8.13\pm 0.26$ \\
		& & $4/255$ & $20$ & $0.688$ & $5.59\pm 0.25$ & $6.05\pm 0.40$ & $11.43\pm 0.24$ & $13.66\pm 0.33$ \\
	    & & $8/255$ & $40$ & $0.734$ & $5.55\pm 0.21$ & $5.94\pm 0.34$ & $13.69\pm 0.19$ & $18.13\pm 0.30$ \\
	    & & $16/255$ & $75$ & $0.719$ & $5.16\pm 0.25$ & $5.28\pm 0.23$ & $19.77\pm 0.22$ & $28.83\pm 0.46$ \\
	    \bottomrule
	\end{tabular}
\end{center}
}
\vskip -3mm
\end{table*}

Figure~\ref{fig:varying both cifar} shows the learning curves regarding risk and adversarial risk for two specific experimental settings (similar results are obtained under other experimental settings, see Appendix~\ref{sec:other exp}).
Figure \ref{fig:varying q} suggests that as we increase the initial covered percentage $q$, both risk and adversarial risk of the corresponding error region decrease. This supports our use of binary search on $q$ in Algorithm \ref{alg:heuristic search}. 
On the other hand, as can be seen from Figure~\ref{fig:varying S}, overfitting with respect to adversarial risk becomes significant as we increase the number of hyperrectangles. According to the adversarial risk curve for testing data, the optimal value of $T$ is selected as $T=10$ for MNIST ($\epsilon_\infty=0.3$) and $T=40$ for CIFAR-10 ($\epsilon_\infty=8/255$). 

Table \ref{table:main exp results} summarizes the optimal parameters, the empirical risk and adversarial risk of the corresponding error region on both training and testing datasets for each experimental setting (see Appendix~\ref{sec:other exp l_infty} for similar results on Fashion-MNIST and SVHN). Since the $k$-means algorithm does not guarantee global optimum, we repeat our method for $10$ runs with random restarts in terms of the best parameters, then report both the mean and the standard deviation.
Our experiments provide examples of rather robust error regions for real image datasets. For instance, in Table \ref{table:main exp results} we have a case where the measure of the resulting error region increases from $5.94\%$ to $18.13\%$ after expansion with $\epsilon_\infty=8/255$ on CIFAR-10 dataset. This means that there could potentially be a classifier with $5.94\%$ risk and $18.13\%$ adversarial risk, but the-state-of-the-art robust classifier has empirically-measured adversarial risk $52.96\%$ \citep{madry2017towards}.

Noticing that the risk lower threshold $\alpha=0.05$ is much lower than the empirical risk $12.70\%$ of the adversarially-trained robust model reported in \citet{madry2017towards}, we further measure the empirical concentration on MNIST and CIFAR-10 using our method with $\alpha$ set to be the same as the reported standard test error in \citet{madry2017towards}, which is demonstrated in Table \ref{table:comparisons}. In particular, we show that the gap between the attack success rate of Madry et al.'s classifier ($10.70\%$) and our estimated best-achievable adversarial risk ($8.28\%$) is quite small on MNIST, suggesting that the robustness of Madry et al.'s classifier is actually close to the intrinsic robustness. In sharp contrast, the gap becomes significantly larger on CIFAR-10: $29.21\%$ for our estimate, while $52.96\%$ for the reported attack success rate in \citet{madry2017towards}. 
Regardless of the difference, this gap cannot be explained by the concentration of measure phenomenon, suggesting there may still be room for developing more robust classifiers, or that other inherent reasons impede learning a more robust classifier.

\begin{table*}[t]
\caption{Comparisons between our method and the existing adversarially trained robust classifiers under different settings. We use the \emph{Risk} and \emph{AdvRisk} for robust training methods to denote the standard test error and attack success rate reported in literature. The \emph{AdvRisk} reported for our method can be seen as an estimated lower bound of adversarial risk for existing classifiers.}\label{table:comparisons}
\small
\begin{center}
	\begin{tabular}{lcccc}
		\toprule
		\textbf{Dataset} & $\textbf{Strength (metric)}$ & \textbf{Method} & \textbf{Empirical Risk} & \textbf{Empirical AdvRisk}\\
		\midrule
		\multirow{2}{*}{\small MNIST} & \multirow{2}{*}{$\epsilon_\infty=0.3$} & \citet{madry2017towards} & $1.20\%$ & $10.70\%$  \\
		& & Ours $(T=10, \alpha=0.012)$ & $1.35\%\pm 0.08\%$ & $8.28\%\pm 0.22\%$ \\
		\midrule
		\multirow{2}{*}{\small MNIST} & \multirow{2}{*}{$\epsilon_2=1.5$} & 
		\citet{schott2018towards} & $1.00\%$ & $20.00\%$  \\
		& & Ours $(T=20, \alpha=0.01)$ & $1.08\%$ & $2.12\%$ \\
		\midrule
		\multirow{2}{*}{\small CIFAR-10} & \multirow{2}{*}{$\epsilon_\infty=8/255$} & \citet{madry2017towards} & $12.70\%$ & $52.96\%$  \\
		& & Ours $(T=40, \alpha=0.127)$ & $14.22\% \pm 0.46\%$ & $29.21\% \pm 0.35\%$ \\
	    \bottomrule
	\end{tabular}
\end{center}
\end{table*}

\subsection{Experiments for $\ell_2$}

For $\ell_2$ adversaries, Theorem \ref{thm:main_ball} guarantees the asymptotic convergence of the empirical concentration function characterized by union of balls $\cB(T)$ towards the actual concentration. Thus, it remains to solve the corresponding optimization problem \eqref{eq:empirical opt l2}. 
Similar to $\ell_\infty$, we propose an empirical method to search for desirable robust error regions under $\ell_2$ perturbations. 
From a high level, our algorithm (for pseudocode, see Algorithm \ref{alg:search l2} in Appendix \ref{sec:algorithms}) places $T$ balls in a sequential manner, and searches for the best possible placement using a greedy approach at each time. Since enumerating all the possible ball centers is infeasible, we restrict the choice of the center to be the set of training data points. Our method keeps two sets of indices: one for the initial coverage 
and one for the coverage after expansion,  
and updates them when we find the optimal placement, i.e. the ball centered at some training data point that has the minimum expansion with respect to both sets. 

\begin{table*}[tb]
\caption{Comparisons between different methods for finding robust error region with $\ell_2$ perturbations.}\label{table:compare exp results l2}
\begin{center}
	\begin{tabular}{lccccccc}
		\toprule
		\multirow{2}{*}[-2pt]{\textbf{Dataset}} & 
		\multirow{2}{*}[-2pt]{$\bm\alpha$} &
		\multirow{2}{*}[-2pt]{$\bm\epsilon_2$} &
        \multicolumn{2}{c}{\textbf{\citet{gilmer2018adversarial}}} & \multicolumn{3}{c}{\textbf{Our Method}} 
		\\ 
		\cmidrule(l){4-5}
		\cmidrule(l){6-8}
		& & & Risk & AdvRisk & $T$ & Risk & AdvRisk \\
		\midrule
		\multirow{3}{*}{\small MNIST} & 	\multirow{3}{*}{$0.01$} & $1.58$ & $1.18\%$  & $3.92\%$ & $20$ & $1.07\%$ & $2.19\%$ \\
		& & $3.16$ & $1.18\%$ & $9.73\%$ & $20$ & $1.02\%$ & $4.15\%$ \\
		& & $4.74$ & $1.18\%$ & $23.40\%$ & $20$ & $1.07\%$ & $10.09\%$ \\
	    \midrule
		\multirow{3}{*}{\small CIFAR-10} & \multirow{3}{*}{$0.05$} & $0.2453$ & $5.27\%$ & $5.58\%$ & $5$ & $5.16\%$ & $5.53\%$ \\
		& & $0.4905$ & $5.27\%$ & $5.93\%$ & $5$ & $5.14\%$ & $5.83\%$ \\
		& & $0.9810$ & $5.27\%$ & $6.47\%$ & $5$ & $5.12\%$ & $6.56\%$ \\
	    \bottomrule
	\end{tabular}
\end{center}
\end{table*}

We compare our empirical method for finding robust error regions characterized by a union of balls with the hyperplane-based approach \citep{gilmer2018adversarial} on MNIST and CIFAR-10. In particular, the risk threshold $\alpha$ is set to be the same as the case of $\ell_\infty$, and the adversarial strength $\epsilon_2$ is chosen such that the volume of an $\ell_2$ ball with radius $\epsilon_2$ is roughly the same as the $\ell_\infty$ ball with radius $\epsilon_\infty$, using the conversion rule $\epsilon_2=\sqrt{n/\pi}\cdot\epsilon_\infty$ as in \citet{wong2018scaling}.
Table \ref{table:compare exp results l2} summarizes the optimal parameters, the testing risk and adversarial risk (see Appendix \ref{sec:other exp l_2} for more detailed results, including for other datasets) of the trained error regions using different methods, where we tune the number of balls $T$ for our method. 

Our results show that there exist rather robust $\ell_2$ error regions for real image datasets. For example, the measure of the resulting error region using our method only increases by $0.69\%$ (from $5.14\%$ to $5.83\%$) after expansion with $\epsilon_2=0.4905$ on CIFAR-10. Compared with \citet{gilmer2018adversarial}, our method is able to find regions with significantly smaller adversarial risk (around half the adversarial risk of regions found by their method) on MNIST, while attaining comparable error region robustness on CIFAR-10. Nevertheless, the adversarial risk attained by state-of-the-art robust classifiers against $\ell_2$ perturbations is much higher than these reported rates (see Table \ref{table:comparisons} for a comparison with the best robust classifier against $\ell_2$ perturbations proposed in \citet{schott2018towards}). 

\section{Conclusion}
To understand whether theoretical results showing limits of  intrinsic robustness for natural distributions apply to concrete datasets, we developed a general framework to measure the concentration of an unknown distribution through its i.i.d.\ samples and a carefully-selected collection of subsets. Our experimental results suggest that the concentration of measure phenomenon is not the sole reason behind vulnerability of the existing classifiers to adversarial examples. In other words, recent impossibility results  \citep{gilmer2018adversarial, fawzi2018adversarial, mahloujifar2018curse, shafahi2018adversarial} should not cause us to lose hope in the possibility of finding more robust classifiers.

\shortsection{Acknowledgements}
This work was partially funded by an award from the National Science Foundation SaTC program (Center for Trustworth Machine Learning, \#1804603), an NSF CAREER award (CCF-1350939), and support from Baidu, Intel, and Amazon.

\bibliography{biblio/adv_exps}

\begin{thebibliography}{44}
\providecommand{\natexlab}[1]{#1}
\providecommand{\url}[1]{\texttt{#1}}
\expandafter\ifx\csname urlstyle\endcsname\relax
  \providecommand{\doi}[1]{doi: #1}\else
  \providecommand{\doi}{doi: \begingroup \urlstyle{rm}\Url}\fi

\bibitem[Athalye et~al.(2018)Athalye, Carlini, and
  Wagner]{athalye2018obfuscated}
Anish Athalye, Nicholas Carlini, and David Wagner.
\newblock Obfuscated gradients give a false sense of security: Circumventing
  defenses to adversarial examples.
\newblock In \emph{International Conference on Machine Learning}, 2018.

\bibitem[Attias et~al.(2019)Attias, Kontorovich, and
  Mansour]{attias2018improved}
Idan Attias, Aryeh Kontorovich, and Yishay Mansour.
\newblock Improved generalization bounds for robust learning.
\newblock In \emph{Algorithmic Learning Theory}, 2019.

\bibitem[Bhagoji et~al.(2019)Bhagoji, Cullina, and Mittal]{bhagoji2019lower}
Arjun~Nitin Bhagoji, Daniel Cullina, and Prateek Mittal.
\newblock Lower bounds on adversarial robustness from optimal transport.
\newblock In \emph{Advances in Neural Information Processing Systems}, 2019.

\bibitem[Biggio \& Roli(2018)Biggio and Roli]{biggio2018wild}
Battista Biggio and Fabio Roli.
\newblock Wild patterns: Ten years after the rise of adversarial machine
  learning.
\newblock \emph{Pattern Recognition}, 84:\penalty0 317--331, 2018.

\bibitem[Carlini \& Wagner(2017)Carlini and Wagner]{carlini2017towards}
Nicholas Carlini and David Wagner.
\newblock Towards evaluating the robustness of neural networks.
\newblock In \emph{IEEE Symposium on Security and Privacy}, 2017.

\bibitem[Cullina et~al.(2018)Cullina, Bhagoji, and Mittal]{cullina2018pac}
Daniel Cullina, Arjun~Nitin Bhagoji, and Prateek Mittal.
\newblock {PAC}-learning in the presence of adversaries.
\newblock In \emph{Advances in Neural Information Processing Systems}, 2018.

\bibitem[Devroye et~al.(2013)Devroye, Gy{\"o}rfi, and
  Lugosi]{devroye2013probabilistic}
Luc Devroye, L{\'a}szl{\'o} Gy{\"o}rfi, and G{\'a}bor Lugosi.
\newblock \emph{A Probabilistic Theory of Pattern Recognition}.
\newblock Springer Science \& Business Media, 2013.

\bibitem[Diochnos et~al.(2018)Diochnos, Mahloujifar, and
  Mahmoody]{diochnos2018adversarial}
Dimitrios Diochnos, Saeed Mahloujifar, and Mohammad Mahmoody.
\newblock Adversarial risk and robustness: General definitions and implications
  for the uniform distribution.
\newblock In \emph{Advances in Neural Information Processing Systems}, 2018.

\bibitem[Eisenstat \& Angluin(2007)Eisenstat and Angluin]{eisenstat2007vc}
David Eisenstat and Dana Angluin.
\newblock The {VC} dimension of k-fold union.
\newblock \emph{Information Processing Letters}, 101\penalty0 (5):\penalty0
  181--184, 2007.

\bibitem[Fawzi et~al.(2018)Fawzi, Fawzi, and Fawzi]{fawzi2018adversarial}
Alhussein Fawzi, Hamza Fawzi, and Omar Fawzi.
\newblock Adversarial vulnerability for any classifier.
\newblock In \emph{Advances in Neural Information Processing Systems}, 2018.

\bibitem[Gilmer et~al.(2018{\natexlab{a}})Gilmer, Adams, Goodfellow, Andersen,
  and Dahl]{gilmer2018motivating}
Justin Gilmer, Ryan~P Adams, Ian Goodfellow, David Andersen, and George~E Dahl.
\newblock Motivating the rules of the game for adversarial example research.
\newblock \emph{arXiv preprint arXiv:1807.06732}, 2018{\natexlab{a}}.

\bibitem[Gilmer et~al.(2018{\natexlab{b}})Gilmer, Metz, Faghri, Schoenholz,
  Raghu, Wattenberg, and Goodfellow]{gilmer2018adversarial}
Justin Gilmer, Luke Metz, Fartash Faghri, Samuel~S Schoenholz, Maithra Raghu,
  Martin Wattenberg, and Ian Goodfellow.
\newblock Adversarial spheres.
\newblock \emph{arXiv preprint arXiv:1801.02774}, 2018{\natexlab{b}}.

\bibitem[Goodfellow et~al.(2015)Goodfellow, Shlens, and
  Szegedy]{goodfellow2015explaining}
Ian Goodfellow, Jonathon Shlens, and Christian Szegedy.
\newblock Explaining and harnessing adversarial examples.
\newblock In \emph{International Conference on Learning Representations}, 2015.

\bibitem[Gowal et~al.(2019)Gowal, Dvijotham, Stanforth, Bunel, Qin, Uesato,
  Arandjelovic, Mann, and Kohli]{gowal2018effectiveness}
Sven Gowal, Krishnamurthy Dvijotham, Robert Stanforth, Rudy Bunel, Chongli Qin,
  Jonathan Uesato, Relja Arandjelovic, Timothy Mann, and Pushmeet Kohli.
\newblock Scalable verified training for provably robust image classification.
\newblock In \emph{IEEE International Conference on Computer Vision (ICCV)},
  2019.

\bibitem[Hartigan \& Wong(1979)Hartigan and Wong]{hartigan1979algorithm}
John~A Hartigan and Manchek~A Wong.
\newblock A {K}-means clustering algorithm.
\newblock \emph{Journal of the Royal Statistical Society. Series C (Applied
  Statistics)}, 28\penalty0 (1):\penalty0 100--108, 1979.

\bibitem[Krizhevsky \& Hinton(2009)Krizhevsky and
  Hinton]{krizhevsky2009learning}
Alex Krizhevsky and Geoffrey Hinton.
\newblock Learning multiple layers of features from tiny images.
\newblock Technical report, University of Toronto, 2009.

\bibitem[Krusinga et~al.(2019)Krusinga, Shah, Zwicker, Goldstein, and
  Jacobs]{krusinga2019understanding}
Ryen Krusinga, Sohil Shah, Matthias Zwicker, Tom Goldstein, and David Jacobs.
\newblock Understanding the (un)interpretability of natural image distributions
  using generative models.
\newblock \emph{arXiv preprint arXiv:1901.01499}, 2019.

\bibitem[LeCun et~al.(2010)LeCun, Cortes, and Burges]{lecun2010mnist}
Yann LeCun, Corinna Cortes, and CJ~Burges.
\newblock {MNIST} handwritten digit database.
\newblock \small\url{http://yann. lecun. com/exdb/mnist}, 2010.

\bibitem[Ledoux(2001)]{ledoux2001concentration}
Michel Ledoux.
\newblock \emph{{The Concentration of Measure Phenomenon}}.
\newblock Number~89 in Mathematical Surveys and Monographs. American
  Mathematical Society, 2001.

\bibitem[L{\'e}vy(1951)]{levy1951problemes}
Paul L{\'e}vy.
\newblock \emph{Probl{\`e}mes concrets d'analyse fonctionnelle}, volume~6.
\newblock Gauthier-Villars Paris, 1951.

\bibitem[Madry et~al.(2018)Madry, Makelov, Schmidt, Tsipras, and
  Vladu]{madry2017towards}
Aleksander Madry, Aleksandar Makelov, Ludwig Schmidt, Dimitris Tsipras, and
  Adrian Vladu.
\newblock Towards deep learning models resistant to adversarial attacks.
\newblock In \emph{International Conference on Learning Representations}, 2018.

\bibitem[Mahloujifar et~al.(2019)Mahloujifar, Diochnos, and
  Mahmoody]{mahloujifar2018curse}
Saeed Mahloujifar, Dimitrios~I Diochnos, and Mohammad Mahmoody.
\newblock The curse of concentration in robust learning: Evasion and poisoning
  attacks from concentration of measure.
\newblock In \emph{AAAI Conference on Artificial Intelligence}, 2019.

\bibitem[Milman \& Schechtman(1986)Milman and Schechtman]{milman1986asymptotic}
Vitali~D Milman and Gideon Schechtman.
\newblock \emph{Asymptotic theory of finite dimensional normed spaces}.
\newblock Springer-Verlag, 1986.

\bibitem[Montasser et~al.(2019)Montasser, Hanneke, and Srebro]{montasser2019vc}
Omar Montasser, Steve Hanneke, and Nathan Srebro.
\newblock {VC} classes are adversarially robustly learnable, but only
  improperly.
\newblock \emph{Proceedings of Machine Learning Research}, 99:\penalty0 1--19,
  2019.

\bibitem[Netzer et~al.(2011)Netzer, Wang, Coates, Bissacco, Wu, and
  Ng]{netzer2011reading}
Yuval Netzer, Tao Wang, Adam Coates, Alessandro Bissacco, Bo~Wu, and Andrew~Y
  Ng.
\newblock Reading digits in natural images with unsupervised feature learning.
\newblock In \emph{{NeurIPS} Workshop on Deep Learning and Unsupervised Feature
  Learning}, 2011.

\bibitem[Neyshabur et~al.(2017)Neyshabur, Bhojanapalli, McAllester, and
  Srebro]{neyshabur2017exploring}
Behnam Neyshabur, Srinadh Bhojanapalli, David McAllester, and Nati Srebro.
\newblock Exploring generalization in deep learning.
\newblock In \emph{Advances in Neural Information Processing Systems}, 2017.

\bibitem[Omohundro(1989)]{omohundro1989five}
Stephen~M Omohundro.
\newblock \emph{Five balltree construction algorithms}.
\newblock International Computer Science Institute Berkeley, 1989.

\bibitem[Papernot et~al.(2016)Papernot, McDaniel, Wu, Jha, and
  Swami]{papernot2016distillation}
Nicolas Papernot, Patrick McDaniel, Xi~Wu, Somesh Jha, and Ananthram Swami.
\newblock Distillation as a defense to adversarial perturbations against deep
  neural networks.
\newblock In \emph{IEEE Symposium on Security and Privacy}, 2016.

\bibitem[Pedregosa et~al.(2011)Pedregosa, Varoquaux, Gramfort, Michel, Thirion,
  Grisel, Blondel, Müller, Nothman, Louppe, Prettenhofer, Weiss, Dubourg,
  Vanderplas, Passos, Cournapeau, Brucher, Perrot, and Édouard
  Duchesnay]{pedregosa2011scikit}
Fabian Pedregosa, Gaël Varoquaux, Alexandre Gramfort, Vincent Michel, Bertrand
  Thirion, Olivier Grisel, Mathieu Blondel, Andreas Müller, Joel Nothman,
  Gilles Louppe, Peter Prettenhofer, Ron Weiss, Vincent Dubourg, Jake
  Vanderplas, Alexandre Passos, David Cournapeau, Matthieu Brucher, Matthieu
  Perrot, and Édouard Duchesnay.
\newblock Scikit-learn: Machine learning in {P}ython.
\newblock \emph{Journal of Machine Learning Research}, October 2011.

\bibitem[Raghunathan et~al.(2018)Raghunathan, Steinhardt, and
  Liang]{raghunathan2018certified}
Aditi Raghunathan, Jacob Steinhardt, and Percy Liang.
\newblock Certified defenses against adversarial examples.
\newblock In \emph{International Conference on Learning Representations}, 2018.

\bibitem[Raghunathan et~al.(2019)Raghunathan, Xie, Yang, Duchi, and
  Liang]{raghunathan2019adversarial}
Aditi Raghunathan, Sang~Michael Xie, Fanny Yang, John~C Duchi, and Percy Liang.
\newblock Adversarial training can hurt generalization.
\newblock \emph{arXiv preprint arXiv:1906.06032}, 2019.

\bibitem[Schott et~al.(2019)Schott, Rauber, Bethge, and
  Brendel]{schott2018towards}
Lukas Schott, Jonas Rauber, Matthias Bethge, and Wieland Brendel.
\newblock Towards the first adversarially robust neural network model on
  {MNIST}.
\newblock In \emph{International Conference on Learning Representations}, 2019.

\bibitem[Scott \& Nowak(2006)Scott and Nowak]{scott2006learning}
Clayton~D Scott and Robert~D Nowak.
\newblock Learning minimum volume sets.
\newblock \emph{Journal of Machine Learning Research}, 7\penalty0
  (Apr):\penalty0 665--704, 2006.

\bibitem[Shafahi et~al.(2019)Shafahi, Huang, Studer, Feizi, and
  Goldstein]{shafahi2018adversarial}
Ali Shafahi, W.~Ronny Huang, Christoph Studer, Soheil Feizi, and Tom Goldstein.
\newblock Are adversarial examples inevitable?
\newblock In \emph{International Conference on Learning Representations}, 2019.

\bibitem[Sinha et~al.(2018)Sinha, Namkoong, and Duchi]{sinha2018certifiable}
Aman Sinha, Hongseok Namkoong, and John Duchi.
\newblock Certifiable distributional robustness with principled adversarial
  training.
\newblock In \emph{International Conference on Learning Representations}, 2018.

\bibitem[Szegedy et~al.(2014)Szegedy, Zaremba, Sutskever, Bruna, Erhan,
  Goodfellow, and Fergus]{szegedy2014intriguing}
Christian Szegedy, Wojciech Zaremba, Ilya Sutskever, Joan Bruna, Dumitru Erhan,
  Ian Goodfellow, and Rob Fergus.
\newblock Intriguing properties of neural networks.
\newblock In \emph{International Conference on Learning Representations}, 2014.

\bibitem[Talagrand(1995)]{talagrand1995concentration}
Michel Talagrand.
\newblock Concentration of measure and isoperimetric inequalities in product
  spaces.
\newblock \emph{Publications Math{\'e}matiques de l'Institut des Hautes Etudes
  Scientifiques}, 81\penalty0 (1):\penalty0 73--205, 1995.

\bibitem[Tsipras et~al.(2019)Tsipras, Santurkar, Engstrom, Turner, and
  Madry]{tsipras2018robustness}
Dimitris Tsipras, Shibani Santurkar, Logan Engstrom, Alexander Turner, and
  Aleksander Madry.
\newblock Robustness may be at odds with accuracy.
\newblock In \emph{International Conference on Learning Representations}, 2019.

\bibitem[Wang et~al.(2018)Wang, Chen, Abdou, and Jana]{wang2018mixtrain}
Shiqi Wang, Yizheng Chen, Ahmed Abdou, and Suman Jana.
\newblock Mix{T}rain: Scalable training of formally robust neural networks.
\newblock \emph{arXiv preprint arXiv:1811.02625}, 2018.

\bibitem[Wong \& Kolter(2018)Wong and Kolter]{wong2018provable}
Eric Wong and Zico Kolter.
\newblock Provable defenses against adversarial examples via the convex outer
  adversarial polytope.
\newblock In \emph{International Conference on Machine Learning}, 2018.

\bibitem[Wong et~al.(2018)Wong, Schmidt, Metzen, and Kolter]{wong2018scaling}
Eric Wong, Frank~R Schmidt, Jan~Hendrik Metzen, and Zico Kolter.
\newblock Scaling provable adversarial defenses.
\newblock In \emph{Advances in Neural Information Processing Systems}, 2018.

\bibitem[Xiao et~al.(2017)Xiao, Rasul, and Vollgraf]{xiao2017fashion}
Han Xiao, Kashif Rasul, and Roland Vollgraf.
\newblock Fashion-{MNIST}: a novel image dataset for benchmarking machine
  learning algorithms.
\newblock \emph{arXiv preprint arXiv:1708.07747}, 2017.

\bibitem[Yin et~al.(2019)Yin, Kannan, and Bartlett]{yin2018rademacher}
Dong Yin, Ramchandran Kannan, and Peter Bartlett.
\newblock Rademacher complexity for adversarially robust generalization.
\newblock In \emph{International Conference on Machine Learning}, 2019.

\bibitem[Zhang et~al.(2019)Zhang, Chen, Xiao, Li, Boning, and
  Hsieh]{zhang2019stable}
Huan Zhang, Hongge Chen, Chaowei Xiao, Bo~Li, Duane Boning, and Cho-Jui Hsieh.
\newblock Towards stable and efficient training of verifiably robust neural
  networks.
\newblock \emph{arXiv preprint arXiv:1906.06316}, 2019.

\end{thebibliography}
\bibliographystyle{iclr2019_conference.bst}

\appendix





\section{Proofs of Theorems in Section \ref{sec:theory}}\label{sec:proofs}
In this section, we prove Theorems \ref{thm:generalization_of_concentration}, \ref{thm:main}, \ref{thm:main_hyperrectangle} and  \ref{thm:main_ball}. 

\subsection{Proof of Theorem \ref{thm:generalization_of_concentration}}
\label{sec:proof thm generalize concentration}
\begin{proof}
Define $g(\mu, \alpha,\epsilon, \cG) = \argmin_{\cE\in \cG}\set{\mu(\cE_\epsilon)\colon \mu(\cE)\geq \alpha}$, and let  $\cE=g(\mu, \alpha + \delta, \epsilon, \cG)$ and $\hat{\cE}=g(\hat{\mu}_S, \alpha, \epsilon, \cG)$. (Note that these  sets achieving the minimum might not exist, in which case we select a set for which the expansion is arbitrarily close to the infimum and every step of the proof will extend to this variant). 

By the definition of the complexity penalty we have 
\begin{equation*}\Pr_{S\gets \mu^m}\left[\norm{\mu(\hat{\cE}) - \hat{\mu}_S(\hat{\cE})} \geq \delta  \right] \leq \phi(m,\delta)
\end{equation*}
which implies
\begin{equation*}
    \Pr_{S\gets \mu^m}[\mu(\hat{\cE}) \leq  \alpha - \delta  ] \leq \phi(m,\delta).
\end{equation*}
Therefore, by the definition of $h$ we have
\begin{equation}\label{eq:left-first-penalty}
\Pr_{S\gets \mu^m}[\mu(\hat{\cE}_\epsilon) \leq  h(\mu,\alpha-\delta, \epsilon, \cG ) ] \leq \phi(m,\delta).
\end{equation}
On the other hand, based on the definition of $\phi_\epsilon$ we have
\begin{equation}\label{eq:left-second-penalty}
\Pr_{S\gets \mu^m}\left[\norm{\mu(\hat{\cE}_\epsilon) - \hat{\mu}_S(\hat{\cE}_\epsilon)} \geq \delta  \right] \leq \phi_\epsilon(m,\delta).
\end{equation}
Combining Equation \ref{eq:left-first-penalty} and Equation \ref{eq:left-second-penalty}, and by a union bound we get
\begin{equation*}\Pr_{S\gets \mu^m}[\hat{\mu}_S(\hat{\cE}_\epsilon) \leq  h(\mu,\alpha-\delta, \epsilon, \cG ) - \delta ] \leq \phi(m,\delta) + \phi_\epsilon(m,\delta)
\end{equation*}
which by the definition of $\hat{\cE}$ implies that
\begin{equation}\label{eq:left}
\Pr_{S\gets \mu^m}[h(\hat{\mu}_S,\alpha, \epsilon, \cG ) \leq  h(\mu,\alpha-\delta, \epsilon, \cG ) - \delta ] \leq \phi(m,\delta) + \phi_\epsilon(m,\delta).
\end{equation}
Now we bound the probability for the other side of our inequality. By the definition of the notion of complexity penalty we have 
\begin{equation*}\Pr_{S\gets \mu^m}[\norm{\mu(\cE) - \hat{\mu}_S(\cE)} \geq  \delta  ] \leq \phi(m,\delta)
\end{equation*}
which implies
\begin{equation*}\Pr_{S\gets \mu^m}[\hat{\mu}_S(\cE) \leq \alpha ] \leq \phi(m,\delta).
\end{equation*}
Therefore, by the definition of $h$ we have,
\begin{equation}\label{eq:right-first-penalty}\Pr_{S\gets \mu^m}[\hat{\mu}_S(\cE_\epsilon) \leq h(\hat{\mu}_S,\alpha, \epsilon, \cG ) ] \leq \phi(m,\delta).
\end{equation}
On the other hand, based on the definition of $\phi_\epsilon$ we have
\begin{equation}\label{eq:right-second-penalty}
\Pr_{S\gets \mu^m}[\norm{\mu(\cE_\epsilon) - \hat{\mu}_S(\cE_\epsilon)} \geq \delta  ] \leq \phi(m,\delta) + \phi_\epsilon(m,\delta).
\end{equation}
Combining Equations \ref{eq:right-first-penalty} and \ref{eq:right-second-penalty}, 
by union bound we get
\begin{equation*}
    \Pr_{S\gets \mu^m}[\mu(\cE_\epsilon) \leq h(\hat{\mu}_S,\alpha, \epsilon, \cG ) - \delta  ] \leq \phi(m,\delta) + \phi_\epsilon(m,\delta)
\end{equation*}
which by the definition of $\cE$ implies
\begin{equation}\label{eq:right}
\Pr_{S\gets \mu^m}[h(\mu,\alpha + \delta, \epsilon, \cG ) \leq h(\hat{\mu}_S,\alpha, \epsilon, \cG ) - \delta  ] \leq \phi(m,\delta) + \phi_\epsilon(m,\delta).
\end{equation}
Now combining Equations \ref{eq:left} and \ref{eq:right}, by union bound we have
\begin{equation*}\Pr_{S\gets \mu^m}[h(\mu, \alpha - \delta, \epsilon, \cG) - \delta \leq h(\hat{\mu}_S, \alpha , \epsilon, \cG) \leq h(\mu, \alpha + \delta, \epsilon, \cG) + \delta] \geq 1 - 2\left(\phi(m,\delta) + \phi_\epsilon(m,\delta)\right).
\end{equation*}
\end{proof}

\subsection{Proof of Theorem \ref{thm:main}}
\label{sec:proof thm main}
In this section, we prove Theorem \ref{thm:main} using ideas similar to ideas used in \cite{scott2006learning}. Before proving the theorem, we lay out the following lemma which will be used in the proof.
\begin{lemma}[Borel-Cantelli Lemma]\label{lem:Borel-Cantelli}
Let $\set{E_T}_{T\in \N}$ be a series of events such that
\begin{equation*}\sum_{T=1}^\infty \Pr[E_T] < \infty
\end{equation*}
Then with probability 1, only finite number of events will occur.
\end{lemma}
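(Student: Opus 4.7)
The plan is to prove this by the classical tail-of-series argument, identifying the event ``infinitely many $E_T$ occur'' with a limsup and bounding its probability by the tail sum, which vanishes by hypothesis.

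First, I would formalize the event of interest. Let $F = \{\omega : \omega \in E_T \text{ for infinitely many } T\} = \limsup_T E_T = \bigcap_{N=1}^\infty \bigcup_{T=N}^\infty E_T$. The conclusion of the lemma is exactly $\Pr[F] = 0$, so the whole task reduces to showing this. Set $G_N = \bigcup_{T=N}^\infty E_T$ so that $F = \bigcap_N G_N$, and note that $G_1 \supseteq G_2 \supseteq \cdots$ is a decreasing sequence of measurable sets.

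Next, I would bound $\Pr[G_N]$. By countable subadditivity (union bound), $\Pr[G_N] \leq \sum_{T=N}^\infty \Pr[E_T]$. The hypothesis $\sum_{T=1}^\infty \Pr[E_T] < \infty$ means the partial sums converge, so the tail $\sum_{T=N}^\infty \Pr[E_T] \to 0$ as $N \to \infty$. Hence $\Pr[G_N] \to 0$.

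Finally, since $F \subseteq G_N$ for every $N$, monotonicity of probability gives $\Pr[F] \leq \Pr[G_N]$ for every $N$, and letting $N \to \infty$ yields $\Pr[F] = 0$. Equivalently, the complement event ``only finitely many $E_T$ occur'' has probability $1$, which is the claimed statement. There is no real obstacle here; the only mild subtlety is invoking countable subadditivity on the (countable) union $G_N$ and recognizing that the convergent series $\sum_T \Pr[E_T]$ forces its tails to vanish, but both are standard measure-theoretic facts that need only be cited.
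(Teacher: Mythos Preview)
Your proof is correct and is the standard argument for the Borel--Cantelli Lemma. Note, however, that the paper does not actually prove this lemma: it is stated without proof as a classical result and then invoked in the proof of Theorem~\ref{thm:main}. So there is no ``paper's own proof'' to compare against; your write-up simply supplies the textbook justification the paper omits.
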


Now we are ready to prove Theorem \ref{thm:main}.

\begin{proof}[Proof of Theorem \ref{thm:main}]
Define $E_T$ to be the event that 
\begin{equation*}h(\mu, \alpha - \delta(T), \epsilon, \cG(T)) -\delta(T) >  h(\hat{\mu}_{S_T}, \alpha, \epsilon) \text{ or } h(\mu, \alpha + \delta(T), \epsilon, \cG(T)) +\delta(T) <  h(\hat{\mu}_{S_T}, \alpha, \epsilon, \cG).
\end{equation*}
Based on Theorem \ref{thm:generalization_of_concentration} we have 
$\Pr[E_T] \leq 2\cdot(\phi^T(m(T),\delta(T))+\phi^T_\epsilon(m(T),\delta(T))).$ Therefore, by Conditions 1 and 2 we have
\begin{equation*}
    \sum_{T=1}^\infty\Pr[E_T] \leq 2\left(\sum_{T=1}^\infty \phi^T\left(m(T),\delta(T)\right) + \phi^T_\epsilon\left(m(T), \delta(T)\right)\right) < \infty.
\end{equation*}
Now by Lemma \ref{lem:Borel-Cantelli}, we know there exist with measure 1 some $j\in \N$, such that for all $T \geq j$,
\begin{equation*}
    h(\mu, \alpha - \delta(T), \epsilon, \cG(T)) - \delta(T) \leq h(\hat{\mu}_{S_T}, \alpha , \epsilon, \cG(T)) \leq h(\mu, \alpha + \delta(T), \epsilon, \cG(T)) + \delta(T).
\end{equation*}
The above implies that
\begin{equation*}
    \lim_{T\to \infty} h(\mu, \alpha - \delta(T), \epsilon, \cG(T)) - \delta(T) \leq \lim_{T\to \infty} h(\hat{\mu}_{S_T}, \alpha , \epsilon, \cG(T)) \leq \lim_{T\to \infty} h(\mu, \alpha + \delta(T), \epsilon, \cG(T)) + \delta(T).
\end{equation*}
We know that 
\begin{align*}
    \lim_{T\to \infty} h(\mu, \alpha - \delta(T), \epsilon, \cG(T)) 
    &= \lim_{T_1\to\infty}\lim_{T_2\to \infty} h(\mu, \alpha - \delta(T_1), \epsilon, \cG(T_2))\\
    \text{(By condition 4)~~}&= \lim_{T_1\to\infty} h(\mu, \alpha - \delta(T_1), \epsilon)\\
    \text{(By local continuity and condition 3)~~}&=  h(\mu, \alpha, \epsilon).
\end{align*}
Similarly, we have
\begin{equation*}
    \lim_{T\to \infty} h(\mu, \alpha + \delta(T), \epsilon, \cG(T)) = h(\mu, \alpha, \epsilon).
\end{equation*}
Therefore we have, 
\begin{equation*}
\lim_{T\to \infty} h(\mu, \alpha , \epsilon) -\delta(T) \leq \lim_{T\to \infty} h(\hat{\mu}_{S_T}, \alpha , \epsilon, \cG(T)) \leq \lim_{T\to \infty} h(\mu, \alpha , \epsilon) + \delta(T)
\end{equation*}
which by condition 3 implies
\begin{equation*}
\lim_{T\to \infty} h(\hat{\mu}_{S_T}, \alpha , \epsilon, \cG(T)) = h(\mu, \alpha , \epsilon).
\end{equation*}
\end{proof}

\subsection{Proof of Theorem \ref{thm:main_hyperrectangle}}
\label{sec:proof thm rect}

\begin{proof}
This theorem follows from our general Theorem $\ref{thm:main}$. We show that the choice of parameters here satisfies all four conditions of Theorem \ref{thm:main}. 

If we let $\cG(T)$ to be the collection of subsets specified by complement of union of $T$ hyperrectangles. Then $\cG_\epsilon(T)$ will be the collection of of subsets specified by complement of union of $T$ hyperrectangles that are bigger than $\epsilon$ in each coordinate. Therefore we have $\cG_\epsilon(T) \subset \cG(T)$. We know that the VC dimension of $\cG(T)$ is $d_T=O(nT\log(T))$ because the VC dimension of all hyperrectangles is $O(n)$ and the functions formed by $T$ fold union of functions in a VC class is at most $n\cdot T \log(T)$ (See \cite{eisenstat2007vc}).   Therefore, by VC inequality we have 
\begin{equation*}\Pr_{S \gets \mu^m} \bigg[\:\sup_{\cE \in \cG(T)} |\mu(\cE) - \hat{\mu}_S(\cE)| \geq \delta \bigg] \leq 8e^{nT\log(T)\log(m) - m\delta^2/128}.
\end{equation*}
Therefore $\Phi^T(m,\delta) = 8e^{nT\log(T)\log(m) - m\delta^2/128}$ is a complexity penalty for both $\cG(T)$ and $\cG_\epsilon(T)$. Hence, if we define $\delta(T) = 1/T$ and $m(T) \geq T^4$, then the first three conditions of Theorem \ref{thm:main} are satisfied. The fourth condition is also satisfied by the universal consistency of histogram rules (See \cite{devroye2013probabilistic}, Ch. 9).
\end{proof}

\subsection{Proof of Theorem \ref{thm:main_ball}}
\label{sec:proof thm ball}
\begin{proof}
Similar to Theorem \ref{thm:main_hyperrectangle} This theorem follows from our general Theorem $\ref{thm:main}$. We show that the choice of parameters here satisfies all four conditions of Theorem \ref{thm:main}. 

If we let $\cG(T)$ to be the collection of subsets specified by union of $T$ balls. Then $\cG_\epsilon(T)$ will be the collection of of subsets specified by union of $T$ balls with diameter at least $\epsilon$. Similar to the proof of Theorem \ref{thm:main_hyperrectangle}, we have $\cG_\epsilon(T) \subset \cG(T)$. We know that the VC dimension of all balls is $O(n)$ so using the fact that $\cG(T)$ is $T$ fold union of balls, the VC dimension of $\cG(T)$ is $d_T=O(nT\log(T))$ (See \cite{eisenstat2007vc}). Therefore, by VC inequality we have complexity penalties similar to those of Theorem \ref{thm:main_hyperrectangle} for both $\cG(T)$ and $\cG_\epsilon(T)$. Hence, if we define $\delta(T) = 1/T$ and $m(T) \geq T^4$, then the first three conditions of Theorem \ref{thm:main} are satisfied. The fourth condition is also satisfied by the universal consistency of kernel-based rules (See \cite{devroye2013probabilistic} , Ch. 10).
\end{proof}

\section{The Proposed Algorithms}
\label{sec:algorithms}

This section provides the pseudocode and a runtime analysis for our algorithms for finding robust error regions under $\ell_\infty$ and $\ell_2$, respectively.

\subsection{Pseudocode}

\begin{algorithm}[h]
\label{alg:heuristic search}
\SetAlgoLined
\SetKwInOut{Input}{Input}
\SetKwInOut{Output}{Output}

\Input{a set of images $\cS$; perturbation strength $\epsilon_\infty$; error threshold $\alpha$; number of hyperrectangles $T$; number of nearest neighbours $k$; precision for binary search $\delta_{\text{bin}}$.}
 $r_k(\bx)\leftarrow$ compute the $\ell_1$-norm distance to the $k$-th nearest neighbour for each $\bx\in\cS$\;
 $\cS_{\text{sort}}\leftarrow$ sort all the images in $\cS$ by $r_k(\bx)$ in an ascending order\;
 $q_{\text{lower}}\leftarrow 0.0$,\:\: $q_{\text{upper}}\leftarrow 1.0$\;
 \While{$q_{\text{upper}}-q_{\text{lower}}>\delta_{\text{bin}}$}{
  $q\leftarrow (q_{\text{lower}}+q_{\text{upper}})/2$\; 
  perform kmeans clustering algorithm ($T$ clusters, $\ell_1$ metric) on the top-$q$ images of $\cS_{\text{sort}}$\;
  $\{\bu^{(t)}\}_{t=1}^T\leftarrow$ record the centroids of the resulted $T$ clusters\;
  \For{$t=1,2,\ldots,T$}{$\Rect(\bu^{(t)},\br^{(t)})\leftarrow$ cover $t$-th cluster with the minimum-sized rectangle centered at $\bu^{(t)}$\;}
  $\cE_{q}\leftarrow\cX\setminus\cup_{t=1}^T \Rect_{\epsilon_\infty}(\bu^{(t)},\br^{(t)})$ \tcp*{$\Rect_\epsilon(\bu,\br)$ \textrm{denotes the $\epsilon$-expansion of $\Rect(\bu,\br)$}}
  \eIf{$|\cS\cap \cE_{q}|/|\cS|$ $\geq \alpha$}{
  $q_{\text{lower}}\leftarrow q$, \:\: \emph{AdvRisk}$_q$ $\leftarrow$ $\big|\big\{\bx\in\cS: \bx\not\in\cup_{t=1}^T\Rect(\bu^{(t)},\br^{(t)})\big\}\big|/|\cS|$\;
  }{$q_{\text{upper}}\leftarrow q$\;}
  }
  $\hat{q}\leftarrow \argmin_{q} \{\emph{\text{AdvRisk}}_{q}\}$\;
\Output{\:($\hat{q}$, $\emph{\text{AdvRisk}}_{\hat{q}}$, $\cE_{\hat{q}}$)}
\caption{Heuristic Search for Robust Error Region under $\ell_\infty$}
\end{algorithm}

\begin{algorithm}[h]
\label{alg:search l2}
\SetAlgoLined
\SetKwInOut{Input}{Input}
\SetKwInOut{Output}{Output}

\Input{a set of images $\cS$; perturbation strength $\epsilon_2$; error threshold $\alpha$; number of balls $T$.}
 $\hat\cE\leftarrow\{\}$,\quad$\hat\cS_{\init}\leftarrow \{\}$,\quad$\hat\cS_{\exp}\leftarrow \{\}$\;
 \For{$t = 1,2,\ldots,T$}{
    $k_{\text{lower}}\leftarrow\ceil{(\alpha|\cS|-|\hat\cS_\init|)/(T-t+1)}$,\quad$k_{\text{upper}}\leftarrow(\alpha|\cS|-|\hat\cS_\init|)$\;
     \For{$\bu\in\cS$}{
        \For{$k\in [k_{\text{lower}},k_{\text{upper}}]$}{
            $r_k(\bu)\leftarrow$ compute the $\ell_2$ distance from $\bu$ to the $k$-th nearest neighbour in $\cS\setminus\hat\cS_{\init}$\;
            $\cS_{\init}(\bu,k)\leftarrow \{\bx\in\cS\setminus\hat\cS_{\init}: \|\bx - \bu\|_2 \leq r_k(\bu)\}$\;
            $\cS_{\exp}(\bu,k)\leftarrow \{\bx\in\cS\setminus\hat\cS_{\exp}: \|\bx - \bu\|_2 \leq r_k(\bu)+\epsilon_2\}$\;
        }
    }
    $(\hat\bu, \hat{k})\leftarrow \argmin_{(\bu,k)}\{|\cS_\exp(\bu,k)| - |\cS_\init(\bu,k) |\}$\;
    $\hat\cE\leftarrow\hat\cE\cup\Ball(\hat\bu,r_{\hat{k}}(\hat\bu))$\;
    $\hat\cS_\init\leftarrow \hat\cS_\init\cup\cS_{\init}(\hat\bu,\hat{k})$,\quad     $\hat\cS_\exp\leftarrow \hat\cS_\exp\cup\cS_{\exp}(\hat\bu,\hat{k})$\;
  }
\Output{\:$\hat\cE$}
 \caption{Heuristic Search for Robust Error Region under $\ell_2$}
\end{algorithm}

\subsection{Runtime Analysis}

For $\ell_\infty$, we construct the systems of hyperrectangles by first precomputing an approximate k-NN distance estimate using Ball Trees \citep{omohundro1989five,pedregosa2011scikit} for each data point, and then clustering the top-$q$ densest data points into $T$ partitions using the k-means algorithm, where we binary search for the optimal parameter $q$. The time complexity of precomputing and sorting the nearest neighbor distance estimates is approximately $O(nd\log(n))$, where $n$ is the total number of data points in $\RR^d$. In addition, the time complexity of k-means algorithm is $O(ndTI)$, where $I$ is the averaged number of iterations for k-means algorithm to converge. Therefore, the total time complexity of the proposed algorithm for $\ell_\infty$ is $O(nd\log(n)+ndTI \log(1/\delta))$. In our experiments on CIFAR-10 ($\epsilon_\infty = 8/255$, $T=40$ and $\delta=0.005$), the proposed algorithm takes $76$ minutes for precomputing the nearest neighbors, and takes around $2$ hours for the iterative steps to converge on a Intel Xeon CPU E5-2620 v4 server with $32$ processors.

For $\ell_2$, instead of computing the k-NN distances for each iteration, we precompute and keep the k-NN neighbours using Ball Trees for each image to save computation, which requires a time complexity of $O(nd \log n)$. The iterative steps require the major computation of $O(\alpha T n^2d)$, since we iterate through all the possible choices of ball centers and corresponding radii to find the optimal error region with the smallest expansion. We believe the quadratic dependency on the sample size can be improved using better searching algorithm for finding the robust error region. Since our main focus is to understand the limitation of robust learning on real datasets, we leave the optimization of the proposed heuristic method for better computational efficiency as future work.

\clearpage
\section{Other Experimental Results}

\subsection{Results for $\ell_\infty$ on other datasets}
\label{sec:other exp l_infty}

We also evaluate the proposed empirical method for $\ell_\infty$ metric on other benchmark image datasets, including Fashion-MNIST~\citep{xiao2017fashion} and SVHN~\citep{netzer2011reading}.

\begin{table*}[h]
\caption{Summary of the main results using our method for different settings with $\ell_\infty$ perturbations.}\label{table:other exp results}
\small{
\begin{center}
	\begin{tabular}{lcccccccc}
		\toprule
		\multirow{2}{*}[-2pt]{\textbf{Dataset}} & 
		\multirow{2}{*}[-2pt]{$\bm\alpha$} &
		\multirow{2}{*}[-2pt]{$\bm\epsilon_\infty$} &
		\multirow{2}{*}[-2pt]{$\bm{T}$} & \multirow{2}{*}[-2pt]{\textbf{Best} $\bm{q}$}
		& \multicolumn{2}{c}{\textbf{Empirical Risk} $\bm{(\%)}$} 
		& \multicolumn{2}{c}{\textbf{Empirical AdvRisk} $\bm{(\%)}$} 
		\\ 
		\cmidrule(l){6-7}
		\cmidrule(l){8-9}
		& & & & & training & testing & training & testing \\
		\midrule
	    \multirow{3}{*}{\small
		\parbox{1.5cm}{Fashion-MNIST}} & 	\multirow{3}{*}[-0.5pt]{$0.05$} & $0.1$ & $10$ & $0.758$ & $5.64\pm 0.78$ & $5.92\pm 0.85$ & $10.30\pm 0.72$ & $11.56\pm 0.84$ \\
		&  & $0.2$ & $10$ & $0.726$ & $5.79\pm 1.00$ & $6.00\pm 1.02$ & $13.44\pm 0.60$ & $14.82\pm 0.71$ \\
	    &  & $0.3$ & $10$ & $0.668$ & $5.90\pm 0.94 $ & $6.13\pm 0.93 $ & $17.46\pm 0.53$ & $18.87\pm 0.66$ \\
	    \midrule
		\multirow{3}{*}{\small
		SVHN} & 	\multirow{3}{*}{$0.05$} & $0.01$ & $10$ & $0.812$ & $5.21\pm 0.19$ & $8.83\pm 0.30$ & $6.08\pm 0.20$ & $10.17\pm 0.29$ \\
	    &  & $0.02$ & $10$ & $0.773$ & $5.31\pm 0.12$ & $8.86\pm 0.20$ & $7.76\pm 0.12$ & $12.46\pm 0.15$ \\
	    &  & $0.03$ & $10$ & $0.750$ & $5.15\pm 0.13$ & $8.55\pm 0.22$ & $8.88\pm 0.13$ & $13.82\pm 0.25$ \\
	    \bottomrule
	\end{tabular}
\end{center}
}
\end{table*}

\subsection{Detailed results for $\ell_2$ using our method}
\label{sec:other exp l_2}

In this section, we demonstrate the detailed training and testing results on the best error region obtained using Algorithm \ref{alg:search l2} on MNIST and CIFAR-10 with $\ell_2$ perturbations, as well as results on Fashion-MNIST and SVHN. Note that for the additional datasets, we set $\alpha$ to be the same as the case of $\ell_\infty$ and set $\epsilon_2=\sqrt{n/\pi}\cdot\epsilon_\infty$ using the same conversion rule, where $n$ is the input dimension.

\begin{table*}[h]
\caption{Summary of the main results using our method for different settings with $\ell_2$ perturbations.}\label{table:main exp results l2}
\begin{center}
	\begin{tabular}{lccccccc}
		\toprule
		\multirow{2}{*}[-2pt]{\textbf{Dataset}} & 
		\multirow{2}{*}[-2pt]{$\bm\alpha$} &
		\multirow{2}{*}[-2pt]{$\bm\epsilon_2$} &
		\multirow{2}{*}[-2pt]{$\bm{T}$} &
        \multicolumn{2}{c}{\textbf{Empirical Risk}} & \multicolumn{2}{c}{\textbf{Empirical AdvRisk}} 
		\\ 
		\cmidrule(l){5-6}
		\cmidrule(l){7-8}
		& & & & \text{training} & \text{testing} & \text{training} & \text{testing} \\
		\midrule
		\multirow{3}{*}{\small MNIST} & 	\multirow{3}{*}{$0.01$} & $1.58$ & $20$  & $1.25\%$ & $1.07\%$ & $2.23\%$ & $2.19\%$ \\
		& & $3.16$ & $20$ & $1.25\%$ & $1.02\%$ & $4.35\%$ & $4.15\%$ \\
		& & $4.74$ & $20$ & $1.25\%$ & $1.07\%$ & $10.71\%$ & $10.09\%$ \\
		\midrule
		\multirow{3}{*}{\small CIFAR-10} & \multirow{3}{*}{$0.05$} & $0.2453$ & $5$ & $5.00\%$ & $5.16\%$ & $5.22\%$ & $5.53\%$ \\
		& & $0.4905$ & $5$ & $5.00\%$ & $5.14\%$ & $5.61\%$ & $5.83\%$ \\
		& & $0.9810$ & $5$ & $5.00\%$ & $5.12\%$ & $6.38\%$ & $6.56\%$ \\
		\midrule
		\multirow{3}{*}{\small
		\parbox{1.5cm}{Fashion-MNIST}} & 	\multirow{3}{*}{$0.05$} & $1.58$ & $10$  & $5.25\%$ & $5.07\%$ & $7.84\%$ & $7.77\%$ \\
		& & $3.16$ & $10$ & $5.25\%$ & $4.99\%$ & $15.95\%$ & $16.23\%$ \\
		& & $4.74$ & $10$ & $5.25\%$ & $5.21\%$ & $19.76\%$ & $20.10\%$\\
		\midrule
		\multirow{3}{*}{\small
		SVHN} & 	\multirow{3}{*}{$0.05$} & $0.3127$ & $10$  & $5.00\%$ & $6.92\%$ & $5.24\%$ & $7.34\%$ \\
		& & $0.6254$ & $10$ & $5.00\%$ & $7.30\%$ & $5.59\%$ & $8.16\%$ \\
		& & $0.9381$ & $10$ & $5.00\%$ & $7.56\%$ & $5.96\%$ & $8.94\%$\\
	    \bottomrule
	\end{tabular}
\end{center}
\end{table*}

\clearpage
\subsection{Additional training curves}
\label{sec:other exp}

\begin{figure*}[h]
  \centering
    \subfigure[MNIST ($\epsilon_\infty=0.1$ and $T=10$)]{
    \includegraphics[width=0.45\textwidth]{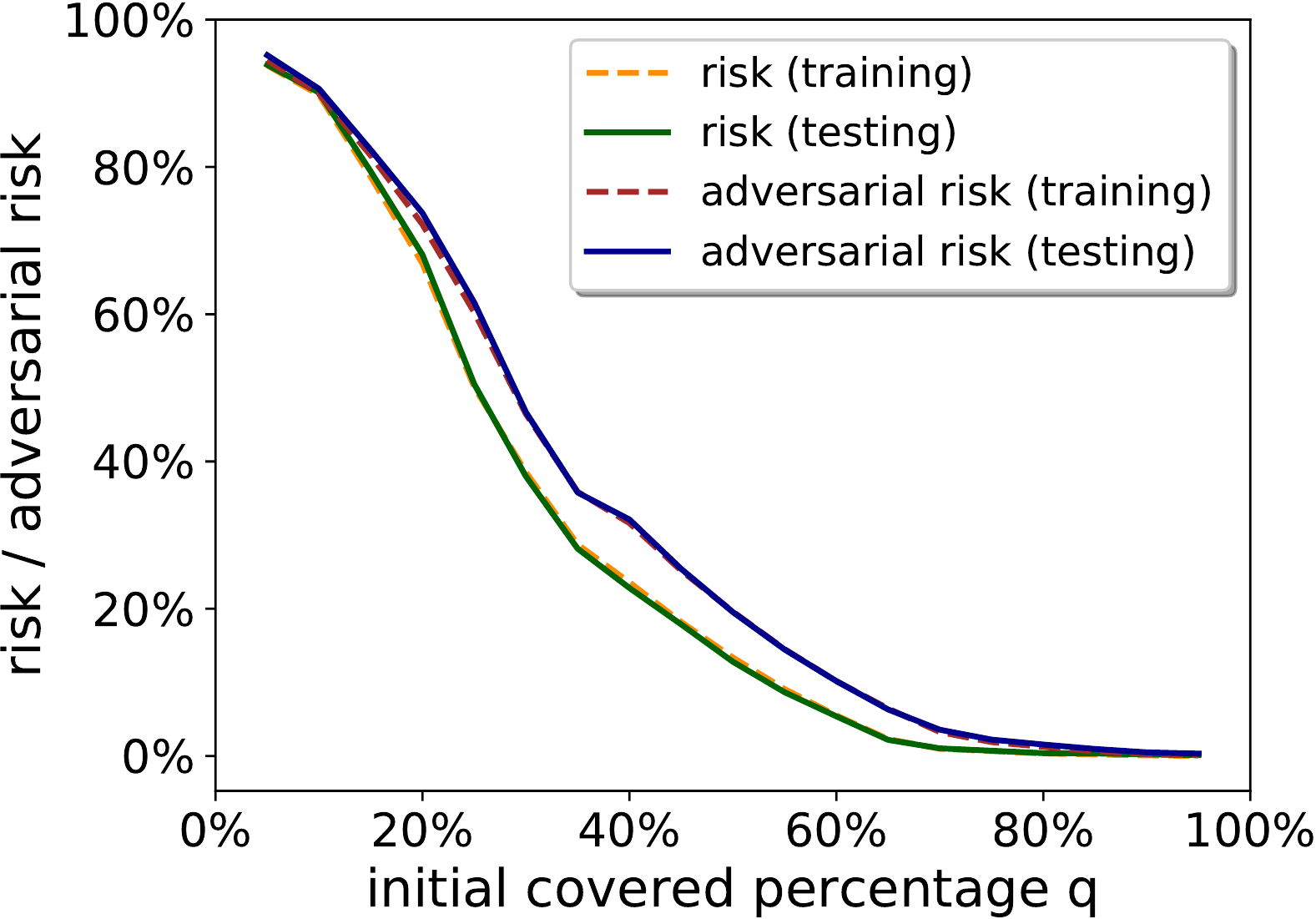}}
    \hspace{0.2in}
    \subfigure[MNIST ($\epsilon_\infty=0.3$ and $T=10$)]{
    \includegraphics[width=0.45\textwidth]{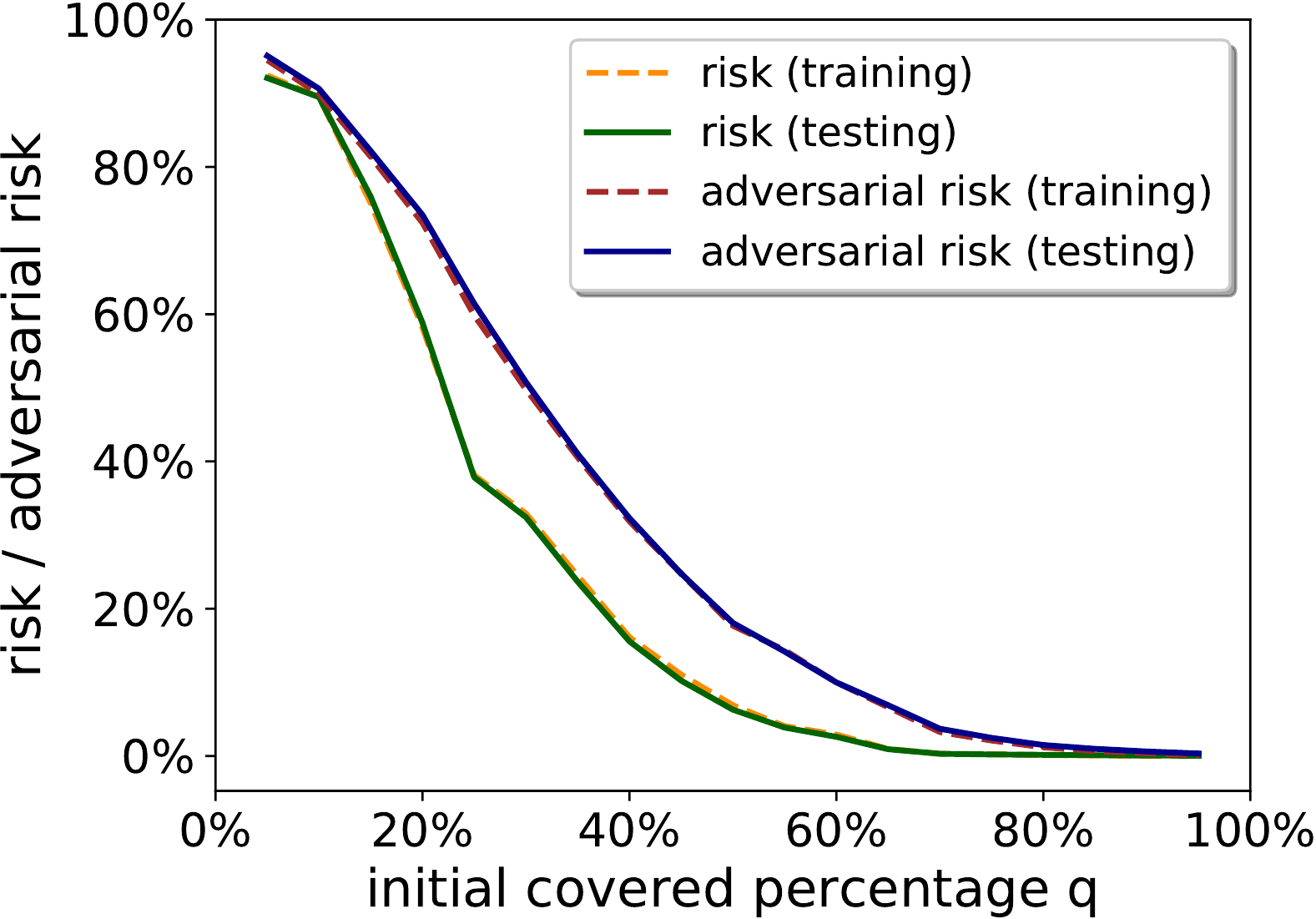}}
    \subfigure[CIFAR-10 ($\epsilon_\infty=4/255$ and $T=20$)]{
    \includegraphics[width=0.45\textwidth]{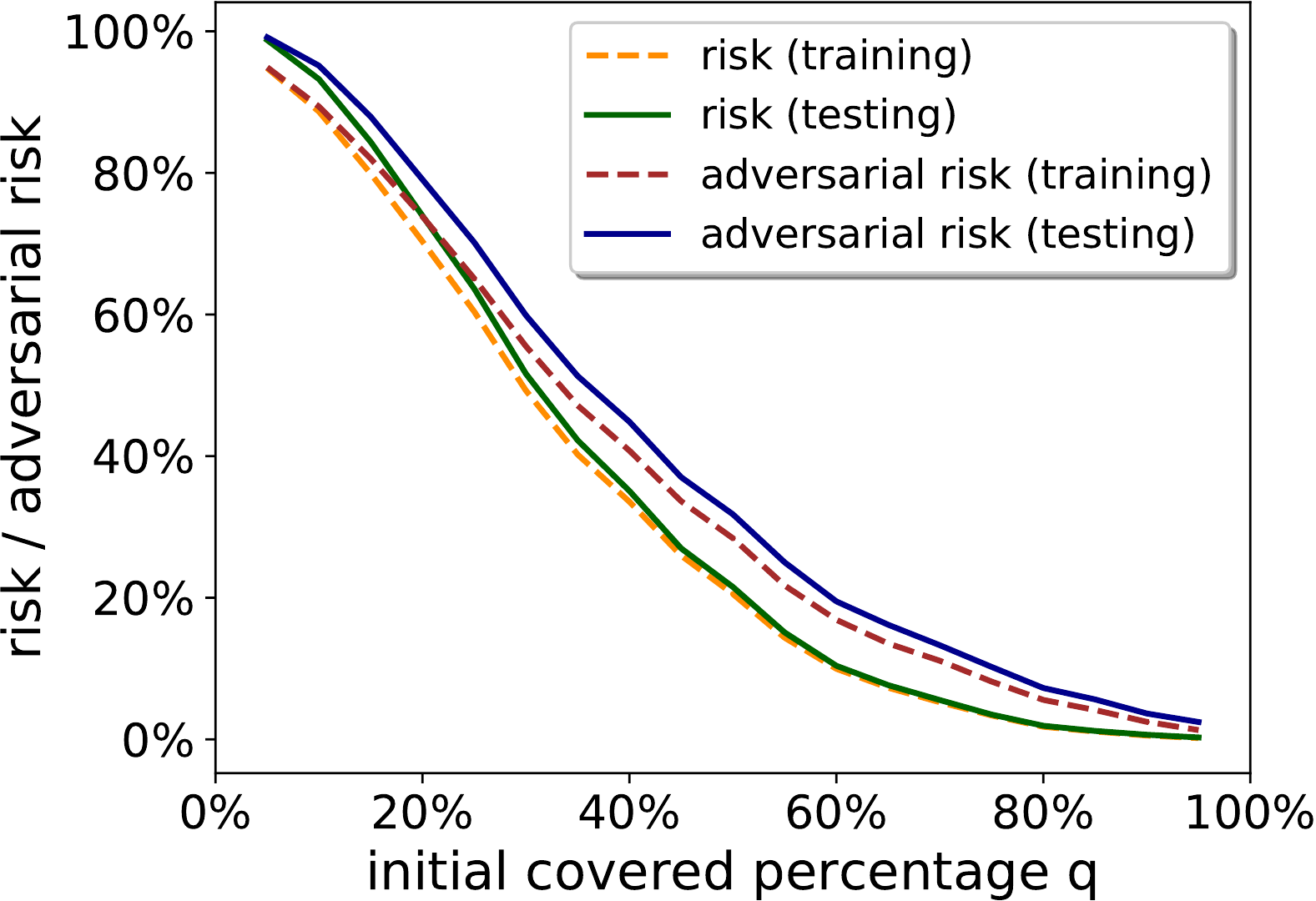}}
    \hspace{0.2in}
    \subfigure[CIFAR-10 ($\epsilon_\infty=8/255$ and $T=40$)]{
    \includegraphics[width=0.45\textwidth]{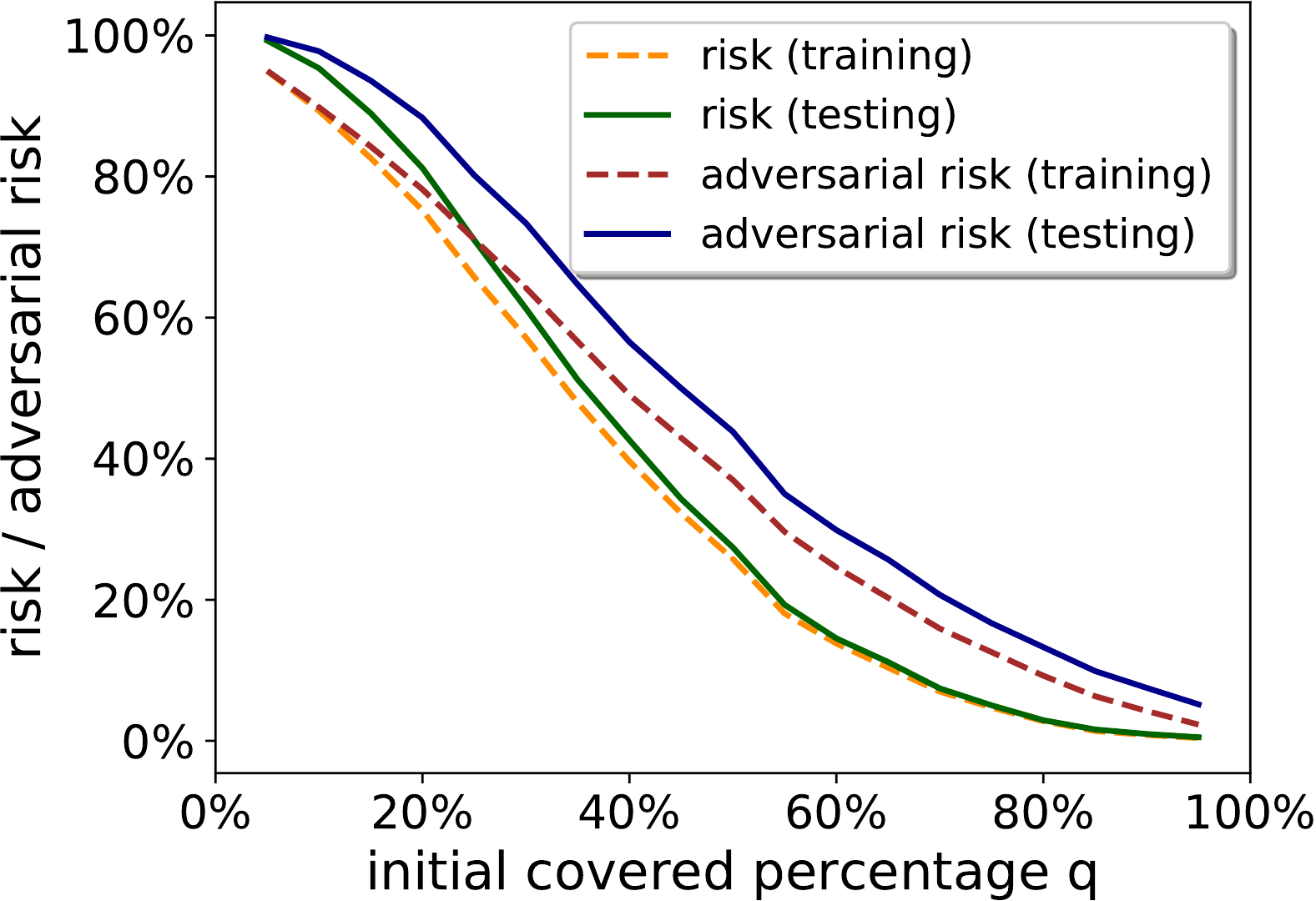}}
  \caption{Risk and adversarial risk of the corresponding region as $q$ varies under different settings.}\label{fig:varying q supp}
\end{figure*}

\begin{figure*}[h]
  \centering
    \subfigure[]{
    \includegraphics[width=0.45\textwidth]{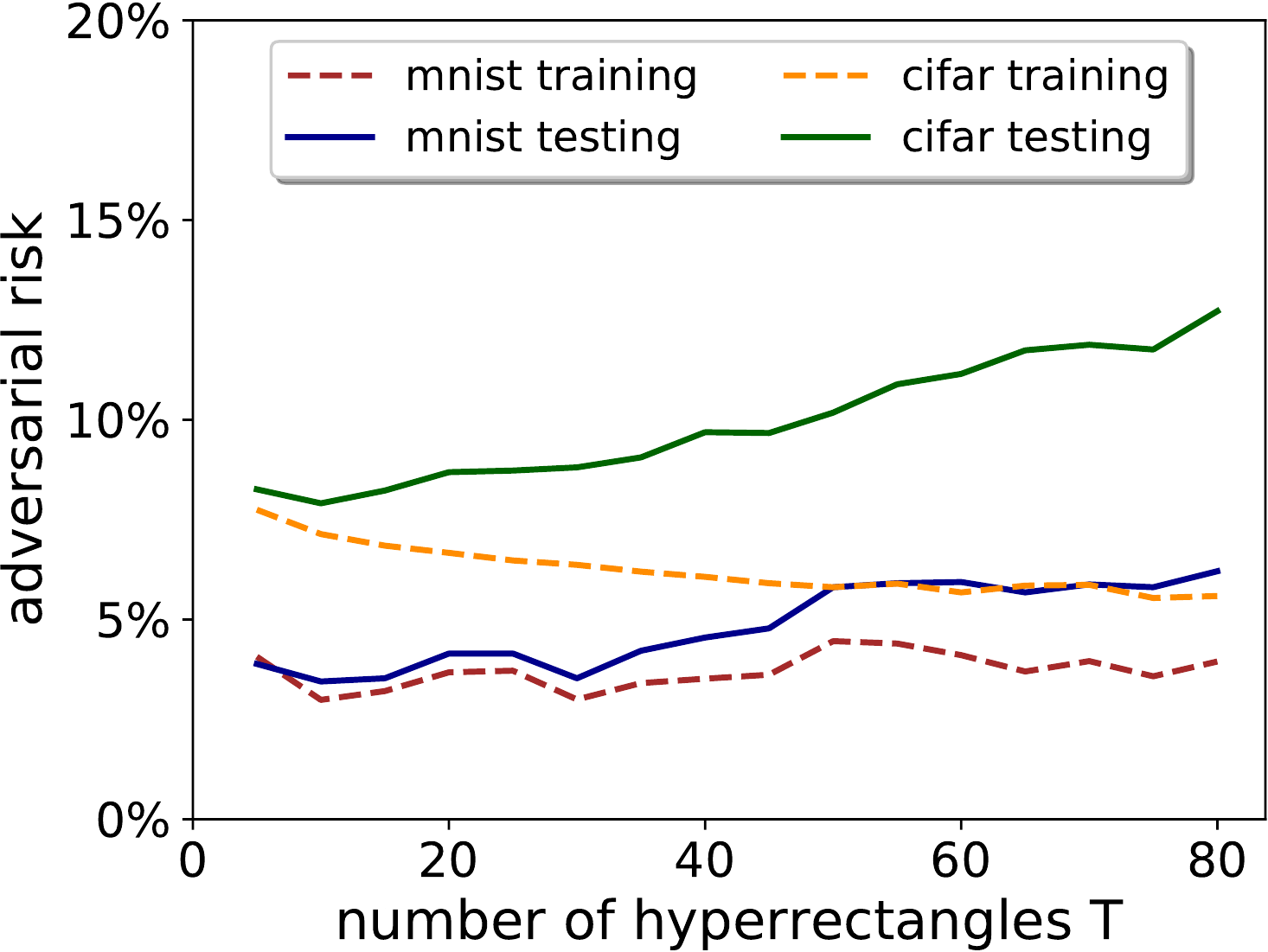}}
    \hspace{0.2in}
    \subfigure[]{
    \includegraphics[width=0.45\textwidth]{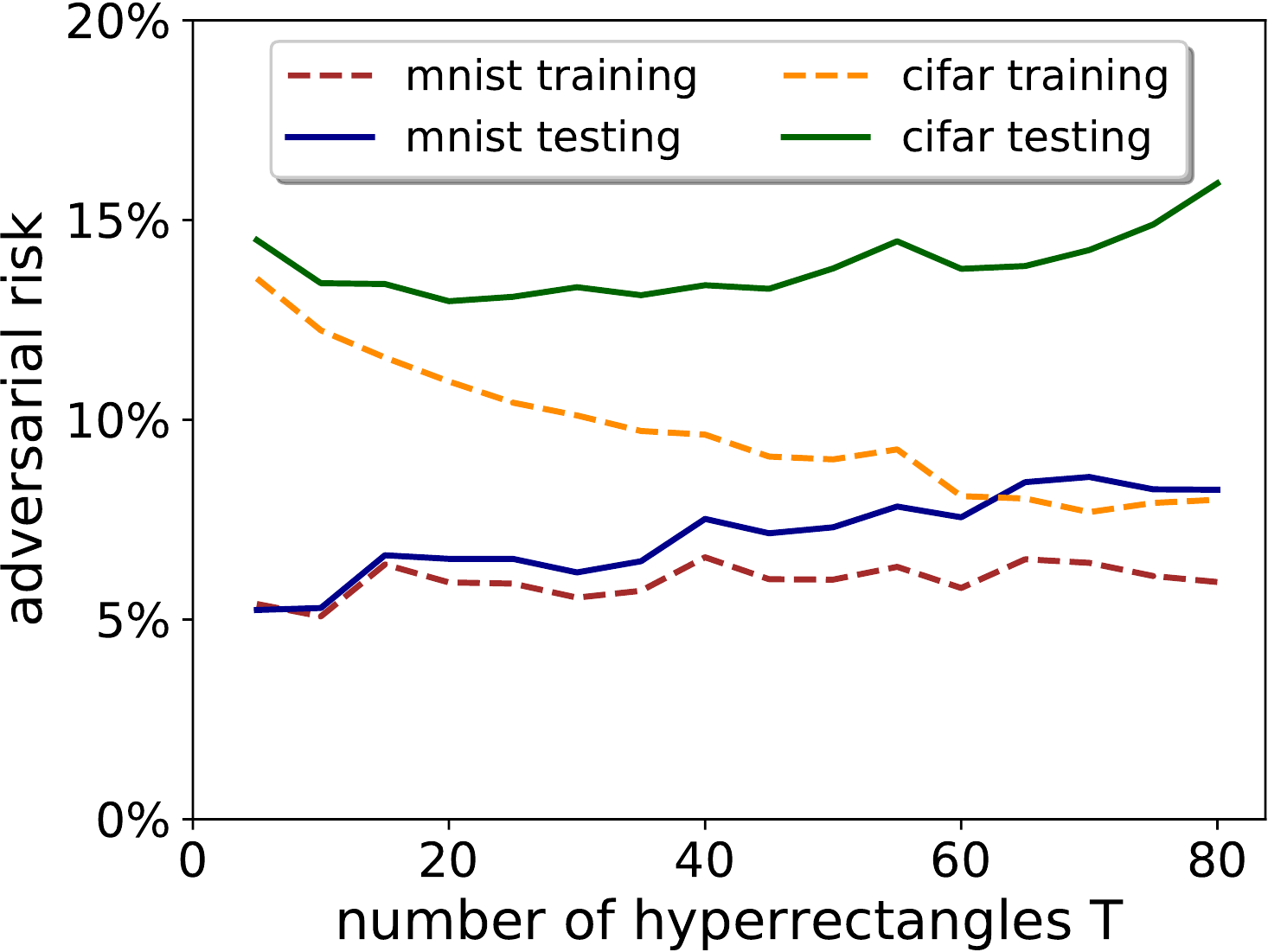}}
  \caption{Adversarial risk of the resulted error region with best $q$ obtained using our method as $T$ varies under different settings: (a) MNIST ($\epsilon=0.1$, $\alpha=0.01$) and CIFAR-10 ($\epsilon_\infty=2/255$, $\alpha=0.05$); (b) MNIST ($\epsilon_\infty=0.2$, $\alpha=0.01$) and CIFAR-10 ($\epsilon_\infty=4/255$, $\alpha=0.05$)}\label{fig:varying T supp}
\end{figure*}

\end{document}